\documentclass[twoside,11pt]{article}

%

\usepackage{jmlr2e}
\usepackage[ruled,linesnumbered]{algorithm2e}
\usepackage{amsmath}
\usepackage{bm}
\usepackage{subcaption}
\usepackage{float}



\jmlrheading{21}{2020}{1-48}{4/20}{/}{Jingfan Chen, Guanghui Zhu, Chunfeng Yuan, and Yihua Huang}


\ShortHeadings{Semi-supervised Embedding Learning for High-dimensional Bayesian Optimization}{Chen, Zhu, Yuan, and Huang}
\firstpageno{1}

\begin{document}

\title{Semi-supervised Embedding Learning for High-dimensional Bayesian Optimization }

\author{\name Jingfan Chen \email jingfan.chen@smail.nju.edu.cn 
       \AND
       \name Guanghui Zhu\thanks{Corresponding authors with equal contribution} \email guanghui.zhu@smail.nju.edu.cn 
       \AND
       \name Chunfeng Yuan \email cfyuan@nju.edu.cn   
       \AND
       \name Yihua Huang\footnotemark[1] \email yhuang@nju.edu.cn \\
       \addr State Key Laboratory for Novel Software Technology\\
       Nanjing University \\
       Nanjing 210023, China
       }

\editor{}

\maketitle

\begin{abstract}
Bayesian optimization is a broadly applied methodology to optimize the expensive black-box function. Despite its success, it still faces the challenge from the high-dimensional search space. To alleviate this problem, we propose a novel Bayesian optimization framework, which finds a low-dimensional space to perform Bayesian optimization through a semi-supervised, iterative, and embedding learning-based method~(SILBO). SILBO incorporates both labeled and unlabeled points acquired from the acquisition function of Bayesian optimization to guide the learning of the embedding space. To accelerate the learning procedure, we present a randomized method for generating the projection matrix. Furthermore, to map from the low-dimensional space to the high-dimensional original space, we propose two mapping strategies: $\text{SILBO-BU}$ and $\text{SILBO-TD}$ according to the evaluation overhead of the objective function. Experimental results on both synthetic function and hyperparameter optimization tasks demonstrate that SILBO outperforms the existing state-of-the-art high-dimensional Bayesian optimization methods. Meanwhile, the semi-supervised dimensional reduction and iterative learning in SILBO are effective for high-dimensional Bayesian optimization.
\end{abstract}

\begin{keywords}
  Bayesian Optimization, High-dimensional Optimization, Semi-supervised Dimension Reduction, Embedding Learning
\end{keywords}

\section{Introduction}

As a well-established approach for sample-efficient global optimization of black-box functions that are expensive to evaluate, Bayesian optimization~(BO) is used in many tasks such as hyperparameter tuning \citep{Hutter2011Sequential,Bergstra2011Algorithm,Snoek2012Practical}, neural architecture search \citep{Kandasamy2018Neural}, and chemical structure search \citep{Bombarelli2018Automatic}. BO provides a principled method for finding the global optimum of black-box function: using the cheap probability surrogate model of black-box function as the input to the acquisition function, repeatedly considering the trade-off between exploitation and exploration to select the promising points. The surrogate model is constructed based on the evaluation values observed so far. A widely-used surrogate model is Gaussian Process regression, which provides the uncertainty quantification of the function value by imposing a Gaussian Process prior.

While BO provides such an automated procedure, it still faces a huge challenge in high-dimensional scenarios. To ensure the converge for learning the function response value, the sample complexity depends exponentially on the number of dimensions \citep{Shahriari2016Taking}. In practice, BO is limited to the optimization problem with around 20 dimensions when using Gaussian Process regression as a surrogate model \citep{frazier2018tutorial}.

To handle high-dimensional Bayesian optimization, many methods have been proposed. Based on the assumption that only a small number of dimensions influence the response value of the objective function, the embedding methods perform BO on a low-dimensional space. The corresponding projection matrix can be constructed randomly \citep{Wang2013Bayesian,nayebi19a}, or learned actively \citep{Djolonga2013High,zhang2019high}. Some methods impose an additive structure on the objective function \citep{gardner17Discovering,k2015high}. Besides, many methods start from the way of learning low-dimensional embedding and find an effective subspace through nonlinear dimension reduction \citep{lu2018Structured}. However, there are two limitations in the existing methods. First, the projection matrix is immutable. Once the generated low-dimensional embedding cannot represent the intrinsic structure of the objective function, finding the global optimum through Bayesian optimization will become very difficult. Second, the low-dimensional space is learned in a supervised way. The label of each point in the learning-based dimensional reduction method indicates the response value of the black-box function. To learn an effective low-dimensional space, a large number of labeled points are required, which leads to huge computation costs especially when the evaluation overhead of the objective function is expensive.

In this paper, we propose a novel framework called SILBO\footnote{SILBO stands for \textbf{S}emi-supervised, \textbf{I}terative, and \textbf{L}earning-based \textbf{B}ayesian \textbf{O}ptimization. The code is
available at \url{https://github.com/cjfcsjt/SILBO.}} to mitigate the problem of \textit{the curse of dimensionality} by learning the effective low-dimensional space iteratively through the semi-supervised dimension reduction method. After a low-dimensional space is identified, Bayesian optimization is performed in the low-dimensional space, leading to a stable and reliable estimation of the global optimum. Specifically, the contribution of this paper is as follows:
\begin{itemize}
\item We propose an iterative method in SILBO to update the projection matrix dynamically. During each iteration of BO, a semi-supervised low-dimensional embedding learning method is proposed to construct the projection matrix by utilizing both labeled and unlabeled points acquired from the acquisition function of BO.
\item To accelerate the semi-supervised dimension reduction, we further propose a randomized method to compute the high-dimensional generalized eigenvalue problem efficiently. We also analyze its time complexity in detail.

\item Furthermore, to map from the low-dimensional space to the high-dimensional original space, we propose two mapping strategies: $\text{SILBO-BU}$ and $\text{SILBO-TD}$ according to the evaluation overhead of the objective function.

\item Experimental results on both synthetic function and neural network hyperparameter optimization tasks reveal that SILBO outperforms the existing state-of-the-art high-dimensional Bayesian optimization methods. Meanwhile, the semi-supervised dimensional reduction and iterative learning in SILBO are effective for high-dimensional Bayesian optimization.
\end{itemize}

The rest of this paper is organized as follows. Section \ref{relatedwork} gives an overview of related work. Section \ref{preliminary} states the problem and lists relevant background materials. The SILBO algorithm is proposed in Section \ref{algorithm}. 
%
The experimental evaluation is presented in Section \ref{numericalresults} and the conclusion is given in Section \ref{conclusion}.

\section{Related Work}\label{relatedwork}
Bayesian optimization has achieved great success in many applications with low dimensions \citep{k2017multifidelity,klein2016fast,Swersky2013Multi,wu2019practical,Wu2017Bayesian,hernndezlobato2015predictive}. However, extending BO to high dimensions is still a challenge. Recently, the high-dimensional BO has received increasing attention and a large body of literature has been devoted to addressing this issue.

Given the assumption that only a few dimensions play a decisive role, the linear low-dimensional embedding method achieves dimension reduction using a projection matrix. In REMBO \citep{Wang2013Bayesian}, the projection matrix is randomly generated according to Gaussian distribution. The promising points are searched in the low-dimensional space by performing Gaussian Process regression and then mapped back to the high-dimensional space for evaluation. It has been proven that REMBO has a great probability to find the global optimum by convex projection, although the high probability is not guaranteed when the box bound exists. Another problem of REMBO is the over-exploration of the boundary. To address the over exploration, a carefully-selected bound in the low-dimensional embedding space was proposed \citep{binois2017choice}, which finds the corresponding points in the high-dimensional space by solving a quadratic programming problem. The BOCK algorithm \citep{oh2018bock} scales to the high-dimensional space using a cylindrical transformation of the search space. HeSBO \citep{nayebi19a} employs the count sketch method to alleviate the over-exploration of the boundary and use the hash technique to improve  computational efficiency. HeSBO also shows that the mean and variance function of Gaussian Process do not deviate greatly under certain condition. However, the above-mentioned methods only use the prior information to generate the projection matrix randomly and do not employ the information of the actual initial points to learn a low-dimensional embedding actively. 

Different from the previous methods, the learning-based methods have been proposed. SIRBO \citep{zhang2019high} uses the supervised dimension reduction method to learn a low-dimensional embedding, while SI-BO \citep{Djolonga2013High} employs the low-rank matrix recovery to learn the embedding. However, the low-dimensional embedding learned by these methods is immutable. Once the projection matrix is generated according to the initial points, it will not be updated. In some scenarios, because of the small number of initial points that have been evaluated, the low-dimensional embedding space cannot accurately reflect the information of the objective function.

Another way for handling the high-dimensional BO is to assume an additive structure \citep{gardner17Discovering} of the objective function. Typically,  ADD-BO \citep{k2015high} optimizes the objective function on a disjoint subspace decomposed from the high-dimensional space. Unfortunately, the additive assumption does not hold in most practical applications. Besides, the non-linear embedding method is also attractive \citep{eissman2018bayesian,lu2018Structured,moriconi2019highdimensional}. These non-linear methods use the Variational Autoencoder~(VAE) to learn a low-dimensional embedding space. The advantage of non-linear learning methods is that points in the original space can be easily reconstructed through the non-linear mapping. However, training VAE requires a large number of points. When the evaluation cost of the objective function is expensive, the non-linear embedding method is almost impractical.

In this paper, we focus on the linear low-dimensional embedding method and propose an iterative, semi-supervised method to learn the embedding.

\section{Preliminary}\label{preliminary}

In this section, we give the problem setup and introduce Bayesian optimization~(BO), semi-supervised discriminant analysis~(SDA), and slice inverse regression~(SIR).

\subsection{Problem Setup}
Consider the black-box function $f:\mathcal{D} \rightarrow [0,1]$, defined on a high-dimensional $d$ and continuous domain $\mathcal{D} = [-1,1]^d \subset \mathbb{R}^d$. $f(\bm{x})$ is computationally expensive and may be non-convex. Given $\bm{x} \in \mathcal{D} $, we can only access the noisy response value $y$ extracted from $f(\bm{x})$ with noise $\epsilon \sim \mathcal{N}(0,\sigma^2)$. Also, we assume that the objective function contains $r \leqslant d$ intrinsic dimensions. In other words, given an embedding matrix $B \in \mathbb{R}^{r \times d}$  with orthogonal rows and a function $g:\mathbb{R}^r \rightarrow [0,1]$, $f(\bm{x}) = g(B\bm{x})$. Our goal is to find the global optimum.
\begin{equation}
    \bm{x}^* = \mathop{\arg\max}\limits_{\bm{x}\in \mathcal{D}}f(\bm{x})
\end{equation}

\subsection{Bayesian Optimization}
Bayesian optimization is an iterative framework, which combines a surrogate model of the black-box function with a search strategy that tries to find possible points with large response values. Given $t$ observation points $\bm{x}_1,...,\bm{x}_t \in \mathcal{D}$ and their corresponding evaluation values $y_1,...,y_t$, the surrogate model is usually Gaussian Process regression that imposes a prior, $f(\bm{x}_{1:t}) \sim \mathcal{GP}(\mu(\bm{x}_{1:t}), k(\bm{x}_{1:t},\bm{x}_{1:t}))$, to the objective function with mean function $\mu$ at each $\bm{x}_i$ and covariance function or kernel $k$ at each pair of points $(\bm{x}_i,\bm{x}_j)$. The kernel function describes the similarity between inputs. One of the widely-used kernel functions is the Matérn kernel. Then, given a new point $\bm{x}^*$, the prediction of the response value can be calibrated by the posterior probability distribution~(noise-free).

\begin{eqnarray}
&&f(\bm{x}^*)|f(\bm{x}_{1:t}) \sim \mathcal{N}(\mu_t(\bm{x}^*),\sigma_t(\bm{x}^*))\label{gp}\\
&&\mu_t(\bm{x}^*) = k(\bm{x}^*,\bm{x}_{1:t})k(\bm{x}_{1:t},\bm{x}_{1:t})^{-1}(f(\bm{x}_{1:t})-\mu(\bm{x}_{1:t}))+\mu(\bm{x}^*)\\
&&\sigma_t(\bm{x}^*) = k(\bm{x}^*,\bm{x}^*) - k(\bm{x}^*,\bm{x}_{1:t})k(\bm{x}_{1:t},\bm{x}_{1:t})^{-1}k(\bm{x}_{1:t},\bm{x}^*)
\end{eqnarray}

At each iteration of Bayesian optimization, the predictive mean and variance are regarded as uncertainty quantification, supporting the subsequent acquisition function optimization. The acquisition function tries to balance between exploration~(high variance) and exploitation~(high mean value). The commonly-used acquisition function for searching promising points is UCB~(Upper Confidence Bound) \citep{Srinivas2010Gaussian}.
UCB tries to select the next point with the largest plausible response value according to Equation~\ref{ucb}.
\begin{equation}
    \bm{x_{t+1}} = \mathop{\arg\max}\limits_{\bm{x}\in\mathcal{D}}\mu_t(\bm{x})+\beta_t^{1/2}\sigma_t(\bm{x})
    \label{ucb}
\end{equation}
where $\beta_t$ is a parameter set used to achieve the trade-off between exploration and exploitation. In this work, we also experiment with EI~(Expected Improvement) \citep{Snoek2012Practical}, which is another popular acquisition function.

\subsection{Semi-supervised Discriminant Analysis}
Semi-supervised discriminant analysis~(SDA) \citep{Cai2007Semi} is a semi-supervised linear dimension reduction algorithm that leverages both labeled and unlabeled points. SDA aims to find a projection that reflects the discriminant structure inferred from the labeled data points, as well as the intrinsic geometrical structure inferred from both labeled and unlabeled points. SDA is an extension of linear discriminant analysis~(LDA). The original LDA aims to find a projection $\beta$ such that the ratio of the between-class scatter matrix $S_b$ to the total within-class scatter matrix $S_t$ is maximized. When the number of training data is small, it can easily lead to overfitting. SDA solves this problem by introducing a regularizer $J(\beta)$ combined with unlabeled information.
\begin{equation}
    \beta^* = \arg\max_\beta  \frac{\beta^{\top}S_b\beta}{\beta^{\top}S_t\beta + \alpha J(\beta)} \label{jbeta}
\end{equation}
where $\alpha$ is a coefficient used to balance between model complexity and experience loss.

$J(\beta)$ is constructed by considering a graph $S$ incorporating neighbor information of the labeled points and the unlabeled points, where $S_{ij}$ indicates whether $x_i$ and $x_j$ are neighbors. Motivated from spectral dimension reduction \citep{Belkin2002Laplacian}, the regularizer can be defined as $J(\beta) = \Sigma_{ij}(\beta^{\top} x_i-\beta^{\top} x_j)^2S_{ij}$ for any two points $x_i$ and $x_j$. Then, given the dataset $X$, $J(\beta)$ can be written as:
\begin{equation}
    J(\beta) = 2\beta^{\top}X(D-S)X^{\top}\beta =  2\beta^{\top}XLX^{\top}\beta \label{transform}
\end{equation}
where $D$ is a diagonal matrix, $D_{ii} = \sum_{j}S_{ij}$, and $L$ is a Laplacian matrix \citep{Chung1997Spectral}.
Finally, SDA can be reformulated as solving the following generalized eigenvalue problem.
\begin{equation}
    S_b\beta= \lambda (S_t+\alpha XLX^{\top}) \beta \label{SDA}
\end{equation}

\subsection{Slice Inverse Regression}
Sliced inverse regression~(SIR) \citep{Li1991Sliced} is a supervised dimension reduction method for continuous response values. SIR aims to find an effective low-dimensional space. The dimension reduction model is:
\begin{equation}
    Y = g(\beta^{\top}_1\bm{x},...,\beta_r^{\top}\bm{x},\epsilon)\label{reductionmodel}
\end{equation}
Here, $Y$ is the response variable, $\bm{x} \in \mathbb{R}^d$ is an input vector, $g$ is an unknown function with $r+1$ arguments. $\{ 
\beta_1,\cdots,\beta_r\}$ denotes orthogonal projection vectors, $r$ denotes the dimensions of the $e.d.r.$~(effective dimension reducing) space and $\epsilon$ is noise. The core idea of SIR is to swap the positions of $\bm{x}$ and Y. The algorithm cuts response value $Y$ into $H$ slices and consider the $H$-dimensional inverse regression curve $E(\bm{x}|Y) = (E(\bm{x}_1|Y),...,E(\bm{x}_H|Y))$ rather than regressing $Y$ on $\bm{x}$ directly. SIR assumes the existence of $e.d.r.$ directions, and the curve that just falls into an $r$-dimensional space. SIR finds the $e.d.r.$ directions by minimizing the total within-slice scatter $\Sigma_x$ and maximize the between-slice scatter $\Sigma_\eta$. Similar to LDA, the problem can be reformulated as a generalized eigenvalue problem.
\begin{equation}
    \Sigma_\eta \beta = \lambda\Sigma_x \beta \label{generalizedproblem}
\end{equation}
Give the assumption that only a small number of dimensions influence the response value of the objective function, SIR can find an effective low-dimensional space without losing the essential information to predict response values, regardless of whether the points are independent and identically distributed (i.i.d.).
SIR can be easily extended to support semi-supervised dimension reduction. For example, Semi-SIR  \citep{Wu2010Local} and Semi-KSIR \citep{Huang2014Semi} take the unlabeled data into consideration.

\section{The SILBO Algorithm}\label{algorithm}

\subsection{Overview}\label{overview}

In this section, we propose a framework that addresses the high-dimensional optimization challenge by learning a low-dimensional embedding space $\mathcal{Z}$ associated with a projection matrix $B$. To learn the intrinsic structure of the objective function effectively, we iteratively update $B$ through semi-supervised dimension reduction. Moreover, we propose a randomized method to accelerate the computation of the embedding matrix $B$. 
%
By performing BO on the learned low-dimensional space $\mathcal{Z}$, the algorithm can approach the $\bm{z}^*\in \mathcal{Z}$ that corresponds to the optimum $\bm{x}^* \in \mathcal{D}$ as close as possible.

\begin{figure}
	\centering
	\begin{subfigure}[t]{\textwidth}
		\centering
		\includegraphics[width=5in,height =1.9in]{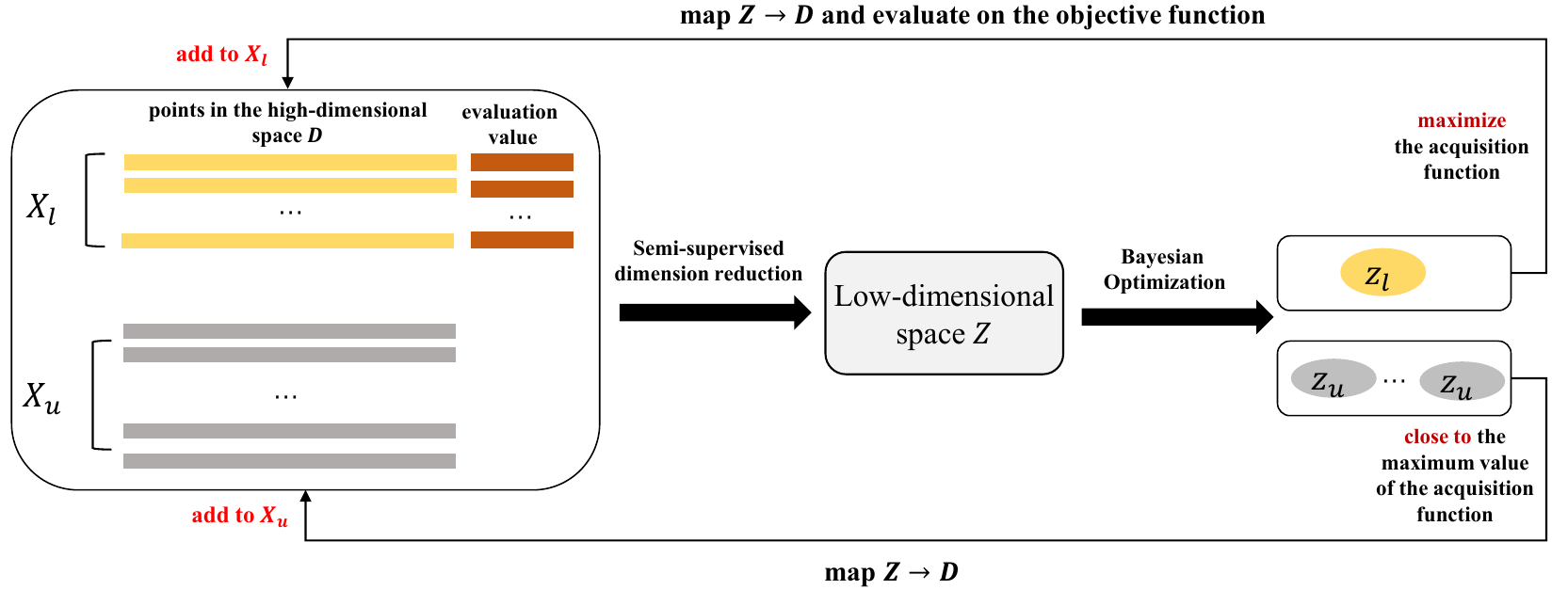}
		\label{fig:overview}
		\vspace{-1ex}
	\end{subfigure}
	
	\caption{An overview of SILBO.}
	\label{fig:overview}
	\vspace{-4ex}
\end{figure}

As shown in Figure \ref{fig:overview}, given the labeled points $X_l$ and unlabeled points $X_u$ where the label represents the evaluation value of the corresponding point, SILBO consists of three tightly-connected phases:
\begin{itemize}
    \item semi-supervised embedding learning, which tries to find an effective low-dimensional space $\mathcal{Z}$ of the objective function by utilizing both $X_l$ and $X_u$;
    \item performing Gaussian Process regression on the learned low-dimensional embedding space and selecting candidate points according to the acquisition function of BO;
    \item evaluating points by the objective function $f$, then updating the Gaussian Process surrogate model and the projection matrix $B$.
\end{itemize}

\begin{algorithm}[htb]
	\caption{SILBO}\label{algorithm1}
	\KwIn{Objective function $f$, acquisition function $\alpha$, high dimension $d$, effective dimension $r$}
	\KwOut{The optimum $x^*$}
	Initialize samples $X_u^0,X_l^0$ with size $n_u$ and $n_l$\;
	Construct $B_0$ with Algorithm \ref{algorithm2}\;
	Let $D_0 = \{B_0X^0_l,f(X^0_l)\}$\;
	Construct Gaussian Process regression model based on $D_0$\;
	$Z_l^0=\varnothing$,$Y_0=\varnothing$\;
	\For{$t=1$ to $N$}{
		$Z_u^{t} = \varnothing$\;
		Generate random point set $C$ from the low-dimensional space\;
		$z_l = \arg\max_{z \in C} \alpha (z)$\;
		$Z_l^{t} = Z_l^{t-1}\cup z_l$\;
		$C = C - z_l$\;
		\For{$n=1$ to $n_u$}{
			$z_u$ = $ \arg\max_{z \in C}\alpha (z)$\;
			$Z_u^{t} \cup z_u$\;
			$C = C - z_u$\;}
		Construct $X_l^{t}$,$X_u^{t}$,$D_{t}$ based on $Z_l^{t}$,$Z_u^{t}$\;
		Update Gaussian process regression model based on $D_{t}$\;
		Update $B_{t}$ with Algorithm \ref{algorithm2} based on $X_u^{t}$ , $X_l^{t}$ and $Y_{t}$\;
	}
\end{algorithm}

These phases are also summarized in Algorithm \ref{algorithm1}.
The first step is to construct the projection matrix $B$ and find an effective low-dimensional space that can keep the information of the original space as much as possible. Both $X_l$ and $X_u$ are selected according to the acquisition function of BO. $X_l$ is the labeled points that have been evaluated on the objective function. $X_u$ is the unlabeled points whose evaluation values are unknown.

The second step is to find the possible low-dimensional optimum $\bm{z}_l \in \mathcal{Z}$ which maximizes the acquisition function $\alpha$ and select several promising unlabeled points $\bm{z}_u \in \mathcal{Z}$ that can be used to update $B$ in the next iteration. Then, $\bm{z}_l$ and $\bm{z}_u$ are mapped back to the high-dimensional space $\mathcal{D}$ through a specific mapping $h:\mathcal{Z} \rightarrow \mathcal{D}$ to get $\bm{x}_l$ and $\bm{x}_u$.

Finally, we compute $y$ by evaluating the objective function $f$ on $x_l$ and update the GP model by $(z_l,y)$. The $x_l$ and $x_u$ will be added to $X_l$ and $X_u$ respectively for updating the embedding in the next iteration.

The low-dimensional embedding is learned through SIR combined with the semi-supervised technique.
The between-slice scatter matrix $\Sigma_\eta$ and total within-slice scatter matrix $\Sigma_x$ are constructed by utilizing the local information as well as the unlabeled points. Then, $B$ is obtained through solving the generalized eigenvalue problem. Based on the randomized SVD method, Algorithm \ref{algorithm2} is proposed to speed up the solution to this problem. Moreover, we carefully analyze the mapping $h$ between the low-dimensional space and the high-dimensional space and further propose two strategies for the scenarios with different evaluation costs.

\subsection{Semi-supervised Dimension Reduction}\label{semisubspace}

The assumption is that there is a low-dimensional space that preserves the information of the objective function $f(\bm{x})$ defined in the high-dimensional space. In other words, the dimensionality of $\mathcal{D}$ can be reduced without losing the essential information to predict response values $Y$. If there are enough evaluated points for the initialization of Bayesian optimization, we may be able to explore the intrinsic structure of $f(\bm{x})$ through these points. However, for optimization problems with high computational cost, only a few evaluated points are available. Thus, it is difficult to learn proper embedding only through them. 
%
In such a case, reasonable and effective use of unevaluated points acquired from the acquisition function of BO will be helpful for embedding learning.

Although SDA provides an effective strategy to incorporate the unlabeled data, it is only suitable for classification problems. Therefore, we need to extend it to the scenarios where the response value is continuous. In fact, SIR is equivalent to LDA, and SDA is an extension of LDA as a generalized eigenvalue problem \citep{Wu2010Local}. Thus, SDA can be applied to regression setting by taking advantage of the discreteness brought by the slicing technique in SIR. We call such extension semi-SIR. Next, we introduce how to construct semi-SIR to learn a low-dimensional embedding.

First, the evaluated points are sorted and cut into $H$ slices according to their response values. Then, similar to SDA \citep{Cai2007Semi}, our problem can be reformulated as follows. Given labeled points $X_l \in \mathbb{R}^{n_l \times d}$ and unlabeled points $X_u \in \mathbb{R}^{n_u \times d}$, $n_l$ and $n_u$ denote the number of the labeled and unlabeled points respectively, we aim to find the embedding matrix $B$ through solving:
\begin{equation}
    X^{\top}WX\beta= \lambda X^{\top}(\hat{I}+\alpha L)X \beta\label{semisir}
\end{equation}
where $X \in \mathbb{R}^{n \times d}$ denotes a centered data matrix whose rows represent $n$ samples in $\mathbb{R}^d$, $n= n_u + n_l$. $W$ is a $n \times n$ weight matrix. $\hat{I}$ can be expressed as: 
 \begin{equation}
     \hat{I} =\begin{bmatrix}\label{identitymatrix}
 I&0 \\ 
  0& 0
\end{bmatrix}
\end{equation}
where $I$ is a $n_l \times n_l$ identity matrix.

Note that the between-slice scatter matrix $X^{\top}WX$ can be written in an elegant linear algebraic formulation.
\begin{equation}
    X^{\top}WX = X^{\top}\Omega\Omega^{\top}X= X^{\top}\begin{bmatrix}
 \Omega_{l}\Omega^{\top}_{l}&0 \\ 
 0& 0
\end{bmatrix}X \label{elegantform}
\end{equation}
where $\Omega_l \in \mathbb{R}^{n_l \times H}$ denotes the rescaled slice membership matrix for those evaluated samples with $\Omega_{l_{ij}} = 1/\sqrt{n_j}$ if the $i$-th sample of $X_l$ is a member of the $j$-th slice. Otherwise, $\Omega_{l_{ij}}=0$. $n_j$ denotes the number of the samples in the $j$-th slice.

For the labeled points in the same slice, their response values are very close, but this closeness may not be retained in $\mathcal{D}$. There could be a small number of points that are far away from others in each slice. Although these \textit{outliers} may indicate that there exist other areas with large response values of the objective function, they are likely to interfere with the embedding we have learned. Thus, to reveal the main information of the objective function as much as possible, we employ the localization technique. By strengthening the local information of each slice, the degeneracy issue in the original SIR can be mitigated \citep{Wu2010Local}. Next, we  illustrate how to construct $W$ in Equation \ref{elegantform} with the local information.

We note that $\Omega_l$ in Equation \ref{elegantform} is a block matrix, and each block corresponds to a slice.
\begin{equation}
    \Omega_l = \begin{pmatrix}
\Omega_l^1 &  & \\ 
 & \ddots & \\ 
 &  & \Omega_l^H 
\end{pmatrix}\label{omega}
\end{equation}

For the original between-slice scatter $X^{\top}WX$, $\Omega_l$ can only indicate whether a sample point belongs to a slice. Here, we will strengthen the localization information by introducing a localized weight. For those evaluated samples, we let $\Omega^h_{ij} = 1/k_h$ if the $i$-th sample belongs to the $k$-nearest neighbors of the $j$-th sample~($j$ can equal to $i$) in the $h$-th slice. Otherwise, $\Omega^h_{ij} = 0$. $H$ denotes the number of slices, $k$ is a parameter for $k$NN, and $k_h$ is the number of neighbor pairs in the $h$-th slice. 

In summary, we aim to estimate the intrinsic structure of the objective function and find an effective subspace that reflects the large response value. Then, we are in a better position to identify the possible global optimum. Therefore, in combination with Bayesian optimization whose acquisition function will provide us candidate points with potentially large response values, semi-supervised dimension reduction can alleviate overfitting caused by the small size of labeled points. Thus, by utilizing both the labeled and unlabeled points, we can learn an $r$-dimensional space that preserves the intrinsic structure of the objective function better. 
\subsection{Randomized Semi-supervised Dimension reduction}\label{random}

In the high-dimensional scenario, learning a low-dimensional embedding space still requires a lot of time. The main bottleneck lies in the solution of generalized eigenvalue problem in Equation \ref{semisir} and the computation of the neighbor information in each slice. The major computation cost of the traditional method such as the two-stage SVD algorithm is the decomposition of the large-scale matrix. Fortunately, the randomized SVD~(R-SVD) technique \citep{Halko2011Finding} shed a light on this problem.

The contribution of R-SVD here is that when the effective dimension $r \ll n < d$, using randomized SVD to approximate a $n \times d$ matrix only requires $O(ndr)$, rather than $O(n^2d)$. Moreover, the empirical results find that when solving the generalized eigenvalue problem, the performance of the randomized algorithm is superior to that of the traditional deterministic method in the high-dimensional case \citep{georgiev2012randomized}. Thus, we first replace the two-stage SVD with R-SVD to accelerate the solution of Equation \ref{semisir}.

We note that the decomposition overhead of the $X^T\Omega \in \mathbb{R}^{d \times n}$ in Equation \ref{semisir} is huge when $d$ is large. Thus, we decompose $X^{\top}\Omega = U_1S_1V_1^{\top}$ using the R-SVD algorithm. Then, the between-slice scatter matrix can be expressed as:
\begin{equation}
    X^{\top}\Omega\Omega^{\top}X = U_1S_1^2U_1^{\top}\label{betweenslice}
\end{equation}

Similarly, due to the symmetry of matrix $\hat{I} + \alpha L$, the right hand side of Equation \ref{semisir} can be decomposed as:
\begin{equation}
    X^{\top}(\hat{I} + \alpha L)X  = X^{\top}MM^{\top} X\label{withinslice}
\end{equation}
where $M$ can be constructed through Cholesky decomposition.
Thus, the generalized eigenvalue problem in Equation \ref{semisir} can be reformulated as:
\begin{equation}
    \frac{1}{\lambda} e= S_1^{-1}U_1^{\top}X^{\top}M M^{\top}XU_1S_1^{-1}e\label{finalform}
\end{equation}
where $e = S_1U_1^{T}\beta$.

Next, if we let $A = S_1^{-1}U_1^{\top}X^{\top}M $, then we can see that this is a traditional eigenvalue problem. $A$ is an $r \times n$ matrix, $r$ is small enough and thus we can decompose it through original SVD naturally: $A = U_2S_2V_2^{\top}$. Finally, the embedding matrix can be computed as:
\begin{equation}
 B = (U_1S_1^{-1}e_1,\cdots,U_1S_1^{-1}e_r) = U_1S_1^{-1}U_2   \label{matrixb}
\end{equation}

The original time complexity of the construction of $X^T\Omega\in \mathbb{R}^{d \times n} $ and $X^TM\in \mathbb{R}^{d \times n} $ is $O(d^2nn_h)$ and $O(d^2nk)$ respectively, including the computation of the exact $k$NN for each point, where $n_h$ is the number of samples in each slice. In addition, the time complexity of decomposition of $X^T\Omega$ is $O(n^2d)$ and the time complexity of construction and computation of Equation \ref{finalform} is $O(ndr)$. Therefore, the overall time complexity is $O(d^2n\max(n_h,k)+n^2d+ndr)$, which is prohibitive in the high-dimensional case. 

To alleviate the expensive overhead in high-dimensional case, we use the fast $k$NN \citep{2003Database,Ailon2009the} to compute the neighbor information, we find that constructing $X^T\Omega$ and $X^TM$
only requires $O(tdnn_h)$ and $O(tdnk)$ respectively. Moreover, using R-SVD to factorize the matrix $X^T\Omega$ only requires $O(ndr)$. Therefore, The overall time complexity reduces to $O(tdn\max(n_h,k)+ndr)$. $t$ denotes a fixed parameter about the logarithm of $n$.

Algorithm \ref{algorithm2} summarizes the above steps.

\begin{algorithm}
\caption{Randomized semi-supervised dimension reduction}\label{algorithm2}
\KwIn{$X_l\in \mathbb{R}^{n_l \times d}$, $X_u\in \mathbb{R}^{n_u \times d}$, number of slice $H$; number of nearest neighbor $k$, effective dimension $r$}
\KwOut{$B \in \mathbb{R}^{d \times r}$} 

compute $X^{\top}\Omega$, $ X^{\top}M$ from Equation \ref{elegantform} and Equation \ref{withinslice}\;
estimate $[U_1,S_1,V_1]$ = Randomized SVD($X^{\top}\Omega$,$r$)\;
$A = S_1^{-1}U_1X^{\top}M$\;
factorize $[U_2,S_2,V_2] $= SVD($A$)\;
compute $B$ from Equation \ref{matrixb}

\end{algorithm}
\subsection{Mapping From Low to High}\label{mapping}

After semi-supervised dimensional reduction, we perform Bayesian optimization with GP on the low-dimensional space $\mathcal{Z}$. But we need to further consider three issues. The first is how to select the search domain $S$ in the low-dimensional embedding space. The second is how to specify a mapping $h$ associate with $B$ to map the point $\bm{z}$ from the embedding space $\mathcal{Z}$ to the high-dimensional space $\mathcal{D}$ where the response value $y$ should be evaluated. The third is how to maintain consistency of the Gaussian process regression model after updating $B$. Only by addressing these issues, the low-dimensional Bayesian optimization can be updated by the data pair $(\bm{z},y)$.

Before elaborating the selection of $S$ in the first issue, we introduce the definition of zonotope, which is a convex symmetric polytope.
\begin{definition}{Zonotope \citep{Le2013Zonotope}}
Given a linear mapping $H \in \mathbb{R}^{r \times d}$ and vector $\bm{p} \in \mathbb{R}^r$, an r-zonotope is defined as
\[
    Z = \{\bm{z}\in \mathbb{R}^r : \bm{z}= \bm{p}+H\bm{x}; \bm{x}\in [-1,1]^d \}
\]
where $\bm{p}$ is the center vector of the zonotope.
\end{definition}
Without loss of generality, let $\bm{p}=\bm{0}$, we note that $H$ is exactly $B$ computed by Equation \ref{matrixb}. Thus, the zonotope $Z_B$ associate with $B$ is the low-dimensional space where Bayesian optimization is performed. Then, we introduce the smallest box containing the zonotope.
\begin{lemma}
Given zonotope $Z = \{\bm{z}\in \mathbb{R}^r : \bm{z}= \bm{p}+H\bm{x}; \bm{x}\in [-1,1]^d \}$, $\bm{p}=\bm{0}$, the smallest box containing this zonotope can be computed as
\[
    S = \left[-\sum^{d}_{j=1}\left |  H_{1j} \right |,\sum^{d}_{j=1}\left |  H_{1j} \right |\right] \times \cdots \times \left[-\sum^{d}_{j=1}\left |  H_{rj} \right |,\sum^{d}_{j=1}\left |  H_{rj} \right |\right]
\]
\end{lemma}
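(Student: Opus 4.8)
The claim is a coordinate-wise statement: the smallest axis-aligned box containing the zonotope $Z$ is the product of intervals whose $i$-th factor has endpoints $\pm\sum_{j=1}^d |H_{ij}|$. The plan is to reduce the problem to $r$ independent one-dimensional optimizations. Since a box $\prod_{i=1}^r [a_i, b_i]$ contains $Z$ if and only if for every $i$ the $i$-th coordinate of every point of $Z$ lies in $[a_i, b_i]$, it suffices to compute, for each fixed $i$, the exact range of the linear functional $\bm{x} \mapsto (H\bm{x})_i = \sum_{j=1}^d H_{ij} x_j$ as $\bm{x}$ ranges over the cube $[-1,1]^d$. The smallest interval containing that range is then $[\min, \max]$, and taking the product over $i$ gives the smallest box (smallest because shrinking any factor would exclude a point of $Z$ whose $i$-th coordinate attains the extreme value).

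\textbf{Key steps.} First, fix a row index $i \in \{1,\dots,r\}$ and consider $\phi_i(\bm{x}) = \sum_{j=1}^d H_{ij} x_j$ on $[-1,1]^d$. This is a linear function on a box, so it is separable in the coordinates $x_j$; each term $H_{ij} x_j$ is maximized over $x_j \in [-1,1]$ by choosing $x_j = \operatorname{sign}(H_{ij})$, giving term value $|H_{ij}|$, and minimized by $x_j = -\operatorname{sign}(H_{ij})$, giving $-|H_{ij}|$. Summing, $\max_{\bm{x}\in[-1,1]^d}\phi_i(\bm{x}) = \sum_{j=1}^d |H_{ij}|$ and $\min_{\bm{x}\in[-1,1]^d}\phi_i(\bm{x}) = -\sum_{j=1}^d |H_{ij}|$; by continuity of $\phi_i$ and connectedness of the cube, $\phi_i$ attains every value in between, so its image is exactly $[-\sum_j|H_{ij}|, \sum_j|H_{ij}|]$. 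Second, with $\bm{p} = \bm 0$ the $i$-th coordinate of a point $\bm z = H\bm x \in Z$ is precisely $\phi_i(\bm x)$, so the projection of $Z$ onto the $i$-th axis is this same interval. Third, assemble: any box containing $Z$ must contain, in its $i$-th factor, the projection of $Z$ onto axis $i$, hence must contain $[-\sum_j|H_{ij}|,\sum_j|H_{ij}|]$; and the product box $S$ in the statement does contain $Z$ since each coordinate of each $\bm z \in Z$ lies in the corresponding factor. Therefore $S$ is the smallest such box, and it is attained (not merely an infimum) because each extreme value is realized by an explicit sign vertex of the cube.

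\textbf{Main obstacle.} There is essentially no deep obstacle here; the only point requiring a little care is the exact meaning of ``smallest box'': one should argue componentwise minimality rather than, say, minimality of volume, and note that because $H$ need not have full column rank or any special structure, the extreme sign vertices $\bm x$ achieving the max in different rows $i$ may differ — but this is harmless, since the box condition decouples across coordinates and does not require a single $\bm x$ to be extremal simultaneously in all rows. A secondary subtlety is the degenerate case where some row $H_{i\cdot}$ is identically zero, in which the $i$-th interval collapses to $\{0\}$; the formula still holds and the argument goes through unchanged. I would therefore present this as a short, direct computation, emphasizing the separability of linear functions over the cube and the coordinatewise characterization of containment.
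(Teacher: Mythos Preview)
Your argument is correct and is the standard way to prove this fact: reduce to independent one-dimensional linear programs over $[-1,1]$, pick sign vertices to attain $\pm\sum_j|H_{ij}|$ in each row, and use the coordinatewise characterization of axis-aligned containment to conclude minimality.

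Note, however, that the paper does not actually supply its own proof of this lemma. It is stated as a fact and the surrounding text simply remarks that the box $S$ is used as a tractable surrogate for the zonotope boundary, with a citation to the relevant literature. So there is no ``paper's approach'' to compare against here; your proposal fills in a short proof that the paper omits. The care you take over the meaning of ``smallest'' (componentwise minimality, no need for a single simultaneous maximizer across rows, and the degenerate zero-row case) is appropriate and more than the paper itself provides.
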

Although the boundary of zonotope is difficult to solve, we use the box $S$ as an alternative \citep{binois2017choice}. 

Next, we focus on the second and third issues. As mentioned before, a mapping $h$ should connect the high-dimensional space and the low-dimensional space. As a result, for any point in the low-dimensional space, a corresponding point can be found in the high-dimensional space. In this way, the low-dimensional point $\bm{z}$ and the response value $y$ of the corresponding high-dimensional point $\bm{x}$ can be used as the training set to build a Gaussian process regression model. Figure \ref{fig:sketch} shows the relationship between them.
We note that the reproduction of the correlation between $\bm{z}$ and $y$ is the goal of the low-dimensional Gaussian process regression model, and the two are closely connected through $\bm{x}$. Thus a reasonable choice of $h$ plays a great influence on such a middle layer. Meanwhile, due to the iterative nature of SILBO, the mapping $h$ will change after updating $B$, which directly makes the correlation between $\bm{z}$ and $y$ inconsistent before and after the update. Therefore, we need to develop a strategy to maintain consistency. Next, we introduce two different strategies to address these two issues.


\begin{algorithm}
	\caption{The $bottom$-$up$ strategy}\label{algorithm3}
	\KwIn{Labeled low-dimensional points $Z_l^{t}$, unlabeled low-dimensional points $Z_u^{t}$}
	\KwOut{Labeled high-dimensional points $X_l^{t}$, unlabeled high-dimensional points $X_u^{t}$, the training set $D_{t}$}
	
	$X_l^{t} = B_{t}^{\dagger}Z_l^{t}$\;
	$X_u^{t} = B_t^{\dagger}Z_u^{t}$\;
	$Y_{t} = f(X^{t}_l)$, $D_{t} = (Z_l^{t},Y_{t}) $\;
	
\end{algorithm}

As shown in Algorithm \ref{algorithm3}, we can naturally get the response value $y$ by evaluating the point $\bm{x}=B^{\dagger}\bm{z}$ after finding the candidate point $\bm{z} \in S$ through BO. When the projection matrix $B$ changed, we fix $z$ at the bottom of the hierarchy in Figure \ref{fig:hierarchy1} and then update $y$ at the top to maintain consistency~($bottom$-$up$ for short).

\begin{figure}[t]
	\centering
	\begin{subfigure}{0.4\textwidth} 
		\includegraphics[width=3in,height =2in]{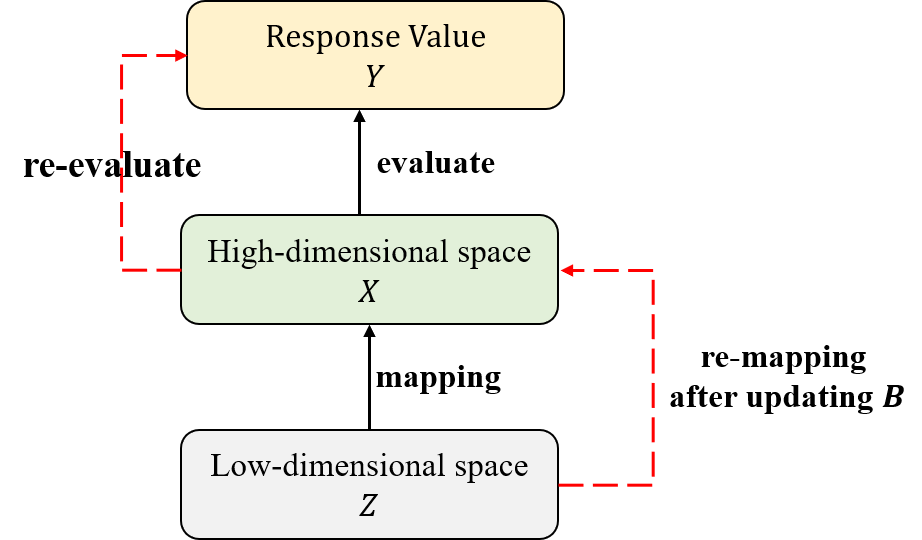}
		\caption{Bottom-up strategy} 
		\label{fig:hierarchy1}
	\end{subfigure}
	\hspace{7em} 
	\begin{subfigure}{0.4\textwidth} 
		\includegraphics[width=2.5in,height =2in]{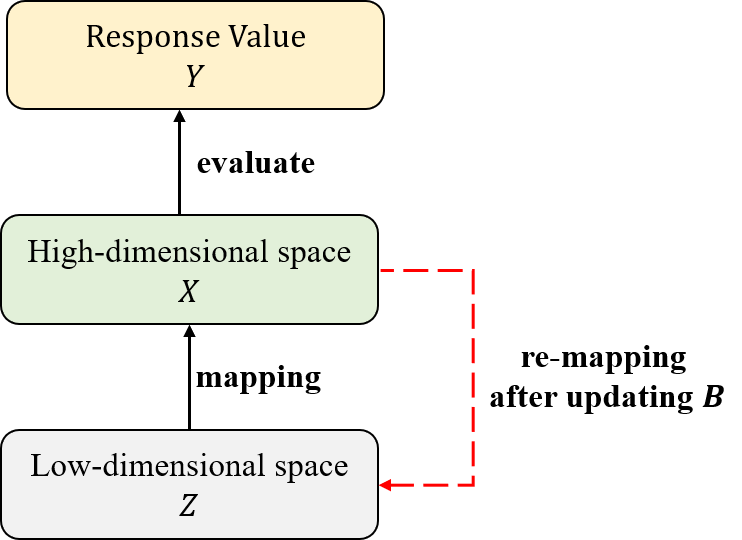}
		\caption{Top-down strategy} 
		\label{fig:hierarchy2}
	\end{subfigure}
	\caption{The relationship between response value $y$, high-dimensional point $x$, and low-dimensional point $z$. The solid line shows the process of mapping and evaluation with different strategies. The left hierarchy denotes the $bottom$-$up$ strategy. The dash line shows the process of re-mapping and re-evaluation. $z$ at the bottom is fixed after updating $B$. Then, re-map $z$ to find and re-evaluate the corresponding $x$. The right hierarchy shows the $top$-$down$ strategy. The dash line shows the re-mapping process. $x$ and $y$ at the top are fixed after updating $B$. Then, $x$ is re-mapped to find the corresponding $z$ and a new Gaussian Process regression model will be constructed.} 
	\label{fig:sketch}
	\vspace{-1ex}
\end{figure}


Since we use $B^{\dagger}$ to connect the points between the low-dimensional space $\mathcal{Z}$ and the high-dimensional space $\mathcal{D}$, the actual evaluation space $\phi$ is only a part of $\mathcal{D}$~(i.e., $\phi \subset \mathcal{D}$) when $B$ is fixed. While a great probability of containing the optimum in $\phi$ has been proved \citep{Wang2013Bayesian}, one of the preconditions is that there is no limitation to the high-dimensional space of the objective function. In practice, the restriction often exists~(e.g., $[-1,1]^d$), making the high probability no longer guaranteed. In contrast, due to the iterative nature of the effective low-dimensional space learning mechanism in SILBO~(Algorithm \ref{algorithm1}), we can alleviate this problem. Specifically, more and more $\phi$ associate with $B$ will be learned, which gradually reveal the location of the optimum.

However, the $bottom$-$up$ strategy may bring more evaluation overhead. When we get the training set $D_{t}=\{\bm{z}_i,y_i \}_{i=1}^{T}$ for BO through $T$ iterations in Algorithm \ref{algorithm1}, we update $B$ to $B_{new}$ and get $\bm{z}_{T+1}$ through the acquisition function $\alpha$. Due to the updating of $B$, the mapping relationship between the high-dimensional and low-dimensional spaces has changed. As shown in Figure \ref{fig:hierarchy1}, we need to find the corresponding $\bm{x}$ again for $\bm{z}$ in the training set and then evaluate $\bm{x}$ to maintain consistency. When the evaluation of the objective function is expensive, the computational cost is huge.

To eliminate the re-evaluation overhead, we propose a new strategy. Specifically, we first obtain $\bm{z}$ in the low-dimensional space through the acquisition function $\alpha$, and then solve the following equation to find the corresponding $\bm{x}$.
\begin{equation}
   \mathop{\arg\min}\limits_{\bm{x}} ||B\bm{x}-\bm{z}||^2 \label{strategy2}
\end{equation}
When the projection matrix $B$ is changed, we fix $\bm{x}$ and $y$ at the top of the hierarchy in Figure \ref{fig:sketch} and then update $\bm{z}$ in the bottom to maintain consistency~($top$-$down$ for short). 

\begin{algorithm}[h]
\caption{The $top$-$down$ strategy}\label{algorithm4}
\KwIn{Labeled low-dimensional points $Z_l^{t}$, unlabeled low-dimensional points $Z_u^{t}$, labeled high-dimensional points $X_l^{t-1}$, the training set $D_{t-1}$}
\KwOut{labeled high-dimensional points $X_l^{t}$, unlabeled high-dimensional points $X_u^{t}$, the training set $D_{t}$}

Get the last point $z_l$ from $Z_l^{t}$\;
Compute $x_l$,$X_u^{t}$ from Equation \ref{strategy2} using $z_l$,$Z_u^t$\;
$X^t_l = X_l^{t-1} \cup x_l$\;
$y = f(x_l)$, $D_t = D_{t-1}\cup(\bm{z}_l,y)$\;

\end{algorithm}

Different from the $bottom$-$up$ strategy, the $top$-$down$ strategy shown in Figure \ref{fig:hierarchy2} updates $\bm{z}$ directly, which enables us to reconstruct a new BO model efficiently only by replacing the training set with $\{B_{new}\bm{x}_i,y_i \}_{i=1}^{T}$ after updating $B$, instead of relocating $\bm{x}$ and re-evaluating them. The $top$-$down$ strategy can be found in Algorithm \ref{algorithm4}. Next, we analyze the rationality of this strategy theoretically.

\begin{theorem}\label{theorem}
Given a matrix $B \in \mathbb{R}^{r\times d}$ with orthogonal rows and its corresponding zonotope $Z_B$. Given $\bm{z} \in Z_B$, for any $\bm{x}_1,\bm{x}_2 \in U_B=\{\bm{x}|\bm{x}=B^{\top}\bm{z}+y,\bm{x} \in \mathbb{R}^d, y\in \text{N}(B)  \}$, $B\bm{x}_1 = B\bm{x}_2 = \bm{z}$. 
\end{theorem}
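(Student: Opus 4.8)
The plan is to unfold the definition of $U_B$ and reduce the claim to the single algebraic identity $BB^{\top} = I_r$, which is exactly what ``orthogonal rows'' gives us (the rows of $B$ are orthonormal, being the rows of the embedding matrix, so $BB^{\top}$ is the $r\times r$ identity). First I would fix an arbitrary $\bm{x}\in U_B$ and use the defining form $\bm{x} = B^{\top}\bm{z} + y$ with $y \in \mathrm{N}(B)$; note this is well posed because $B^{\top}\bm{z}\in\mathbb{R}^d$ and $\mathrm{N}(B)\subseteq\mathbb{R}^d$, so $U_B$ is a nonempty affine subset of $\mathbb{R}^d$ (an affine translate of the null space by the particular solution $B^{\top}\bm{z}$).

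Next I would simply compute, using linearity of $B$,
\[
    B\bm{x} = B\bigl(B^{\top}\bm{z} + y\bigr) = (BB^{\top})\bm{z} + By = I_r\,\bm{z} + \bm{0} = \bm{z},
\]
where $BB^{\top}=I_r$ by orthonormality of the rows of $B$ and $By=\bm{0}$ since $y\in\mathrm{N}(B)$. Because $\bm{x}$ was an arbitrary element of $U_B$, this already shows $B\bm{x}=\bm{z}$ for every $\bm{x}\in U_B$; applying it to $\bm{x}_1$ and $\bm{x}_2$ separately yields $B\bm{x}_1 = B\bm{x}_2 = \bm{z}$, which is the assertion. (One could optionally remark that $U_B$ is in fact the \emph{entire} preimage $B^{-1}(\{\bm{z}\})$, since $B^{\top}\bm{z}$ is the minimum-norm solution of $B\bm{x}=\bm{z}$ and any two solutions differ by an element of $\mathrm{N}(B)$; this connects the statement back to Equation~\ref{strategy2}, whose minimizer is precisely $B^{\top}\bm{z}$ when $BB^{\top}=I_r$.)

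There is essentially no analytic obstacle here; the only thing to be careful about is the interpretation of the hypothesis. The ``hard'' (really just the load-bearing) step is justifying $BB^{\top}=I_r$: the theorem says ``orthogonal rows,'' and the argument needs them to be orthonormal. This is consistent with the problem setup, where $B\in\mathbb{R}^{r\times d}$ is taken to have orthonormal rows so that $f(\bm{x})=g(B\bm{x})$ with $B$ an isometric embedding of the active subspace; I would state this normalization explicitly at the start of the proof so the identity $BB^{\top}=I_r$ is unambiguous, and the rest is immediate.
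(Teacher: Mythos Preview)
Your proof is correct and follows essentially the same route as the paper: both reduce the claim to the computation $B\bm{x}_i = BB^{\top}\bm{z} + By_i = \bm{z}$, using $y_i\in\mathrm{N}(B)$ and $BB^{\top}=I_r$. The paper dresses this up with an orthogonal decomposition $\bm{x}=\bm{x}_V+\bm{x}_V^{\bot}$ relative to $V=\mathrm{Row}(B)$ before arriving at the identical calculation, whereas you simply unfold the definition of $U_B$; your explicit flagging of the orthonormal (not merely orthogonal) normalization needed for $BB^{\top}=I_r$ is a useful clarification the paper leaves implicit.
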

\begin{proof}
Let $\bm{x} = \bm{x}_V+\bm{x}_V^{\bot} \in \mathbb{R}^d$ be the orthogonal decomposition with respect to $V$ = Row($B$), then $\bm{x}_V \in V$. Let $\bm{z} \in Z_B$ with $B^{\top}\bm{z} = \bm{x}_V$. Due to the orthogonal decompostion, we have $\bm{x}-\bm{x}_V = \bm{x} - B^{\top}\bm{z}$ lies in $V^{\bot}=\text{N}(B)$. For any $\bm{x}_1, \bm{x}_2\in U_B$, we have $\bm{x}_1-\bm{x}_V = \bm{x}_1 - B^{\top}\bm{z}$ and $\bm{x}_2-\bm{x}_V = \bm{x}_2 - B^{\top}\bm{z}$, then $B( \bm{x}_1 - B^{\top}\bm{z})=B( \bm{x}_2 - B^{\top}\bm{z})=0$. Thus, $B\bm{x}_1 = B\bm{x}_2 = BB^{\top}\bm{z} = \bm{z}$.
\end{proof}

According to Theorem \ref{theorem}, if we assume that $B$ has orthogonal rows, it is clear that given $B$ and $\bm{z}$, any point $\bm{x} \in U_B$ will be the solution to Equation \ref{strategy2}. However, our purpose is not $\bm{x}$ itself, but $y=f(\bm{x})$, because $(\bm{z},y)$ is the data pair used to update BO. Fortunately, if we use SILBO to learn an ideal effective low-dimensional subspace $B^*$, then for any $\bm{x}_1,\bm{x}_2$ in the solution set $U_{B^*}$, $f(\bm{x}_1)=f(B^{*\top}\bm{z}+y_1) = f(B^{*\top}\bm{z})$, $f(\bm{x}_2)=f(B^{*\top}\bm{z}+y_2) = f(B^{*\top}\bm{z})$~(i.e., $f(\bm{x}_1)=f(\bm{x}_2)$). Thus, in each iteration $T$, we can obtain the unique data pair $ (\bm{z}_T,y_T)$ under $B^*$ to update the low-dimensional Bayesian optimization model. Therefore, the diversity of solutions in the set $U_B$ does not affect the construction of BO and we can explore in the original space $\mathcal{D}$ which liberates the shackles of $\phi$.

In summary, both the two strategies aim to generate a consistent training set for the subsequent GP construction when updating $B$. To maintain such consistency, the $bottom$-$up$ strategy acquires more observations $y$ from the objective function according to $\bm{z}$ in the training set while the $top$-$down$ strategy directly changes $\bm{z}$ according to $y$. Note that the more response values, the more information about the objective function we can get. Thus, the $bottom$-$up$ strategy can obtain more clues about the global optimum than the $top$-$down$ strategy.

\section{Numerical Results}\label{numericalresults}
In this section, we evaluated our proposed algorithm SILBO on a set of high-dimensional benchmarks. First, we compared SILBO with other state-of-the-art algorithms on synthetic function\footnote{The synthetic function can be found at \url{https://www.sfu.ca/~ssurjano/optimization.html.}} and neural network hyperparameter optimization tasks. Second, we demonstrated the effectiveness of semi-supervised embedding learning and iterative learning. Finally, we evaluated and analyzed the scalability of SILBO.

\subsection{Experiment Setting}
For comparison, we also evaluated other state-of-the-art algorithms: REMBO \citep{Wang2013Bayesian}, which performs BO on a random generated linear embedding space, REMBO-$K_X$, which computes the kernel using the distance in the high-dimensional space, REMBO-$K_{\psi}$ \citep{binois2014warped}, which uses a warping kernel and finds the evaluation point through sophisticated mapping, and HeSBO \citep{nayebi19a}, which uses the count-sketch technique to generate subspace and compute the projection matrix. Additional methods include ADD-BO \citep{k2015high}, which assumes that the objective function has an additive structure, and recently-proposed SIR-BO \citep{zhang2019high}, which uses SIR to learn a low-dimensional space purely. $\text{SILBO-BU}$ and $\text{SILBO-TD}$ represent the proposed SILBO algorithm that employs the $bottom$-$up$ and $top$-$down$ mapping strategies respectively.

We employed the package \textit{GPy}\footnote{\url{https://github.com/SheffieldML/GPy}} as the Gaussian Process framework and selected the Matérn kernel for the GP model. We adopted the package ristretto\footnote{\url{https://github.com/erichson/ristretto}} to perform the R-SVD. 
We employed \textit{CMA-ES} \citep{hansen16CMA} to compute Equation \ref{strategy2} for $\text{SILBO-TD}$.
The embedding matrix $B$ is updated every 20 iterations. The size of the unlabeled points is set to 50 in each iteration. The number of neighbors $k$ is set to $7$ and the number of unlabeled points is set to $50$ for semi-supervised dimension reduction.
To obtain error bars, we performed 100 independent runs for each algorithm on synthetic functions and 5 runs for the neural network hyperparameter search. We plotted the mean value along with 
one standard deviation. For ADD-BO, we used the authors' implementation through $MATLAB$, so we did not compare its scalability since other algorithms were implemented in $Python$. 
Also, we evaluated SILBO using two acquisition functions: UCB and EI.
The number of initial random points defaults to 50 and the response values of initial points are not considered when observing the experimental performance except in Section~\ref{effectiveofiterativelearning}, which uses different number of initial points, so the response values of these points are shown in the performance curve.

\subsection{Performance on Synthetic Functions}

\begin{figure}
    \centering
    \begin{subfigure}[t]{0.4\textwidth}
        \centering
        \includegraphics[width=1\textwidth]{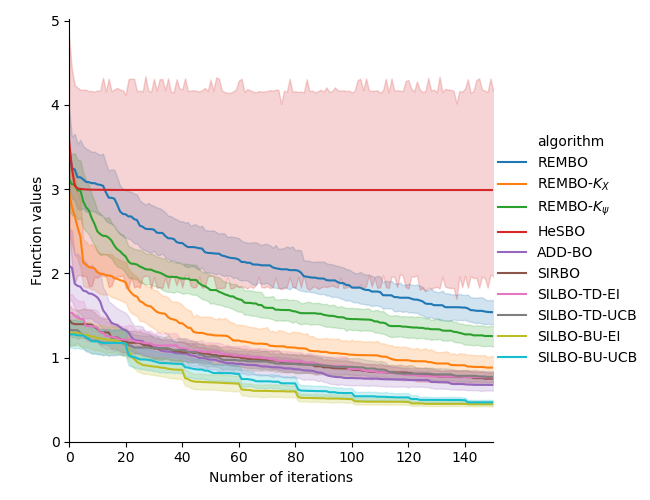}
        \caption{100d Branin}
    \end{subfigure}%
    ~ 
    \begin{subfigure}[t]{0.4\textwidth}
        \centering
        \includegraphics[width=1\textwidth]{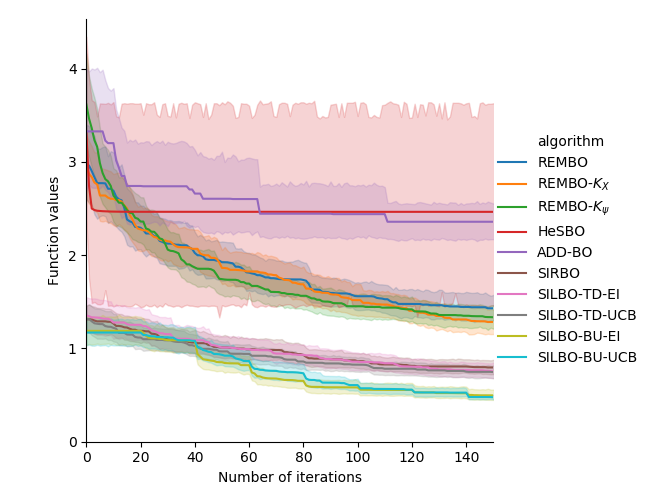}
        \caption{1000d Branin}
    \end{subfigure}
    \caption{Performance on Branin under dimension 100(a) and dimension 1000(b) with embedding dimension $r$ = 2. We plotted mean and one standard deviation across 100 independent runs.}
    \label{fig:branin}
\end{figure}


\begin{figure}
    \centering
    \begin{subfigure}[t]{0.4\textwidth}
        \centering
        \includegraphics[width=1\textwidth]{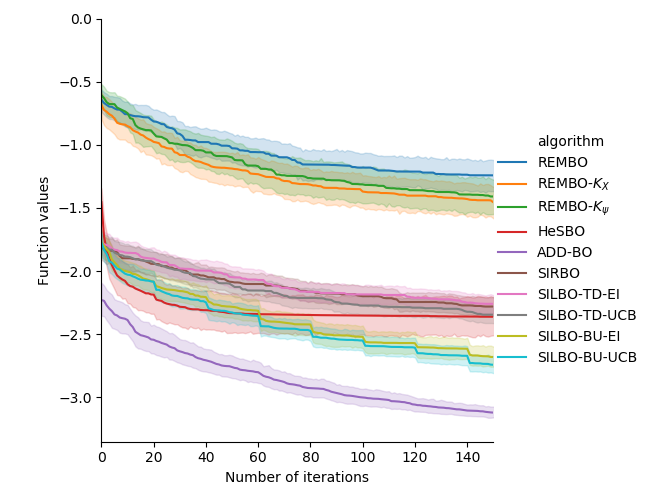}
        \caption{100d hartmann6}
    \end{subfigure}%
    ~ 
    \begin{subfigure}[t]{0.4\textwidth}
        \centering
        \includegraphics[width=1\textwidth]{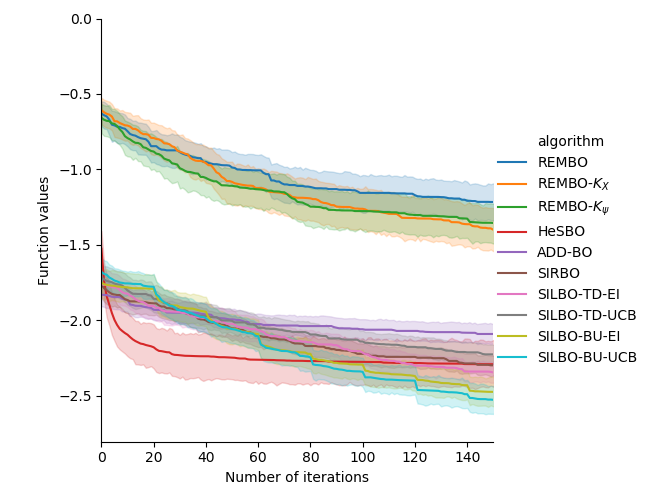}
        \caption{1000d hartmann6}
        \label{fig:hartmann-1000}
    \end{subfigure}
    \caption{Performance on Hartmann6 under dimension 100(a) and dimension 1000(b) with embedding dimension $r$ = 6. We plotted mean  and one standard deviation across 100 independent runs.}
    \label{fig:hartmann}
\end{figure}


\begin{figure}[htb]
    \centering
    \begin{subfigure}[t]{0.4\textwidth}
        \centering
        \includegraphics[width=1\textwidth]{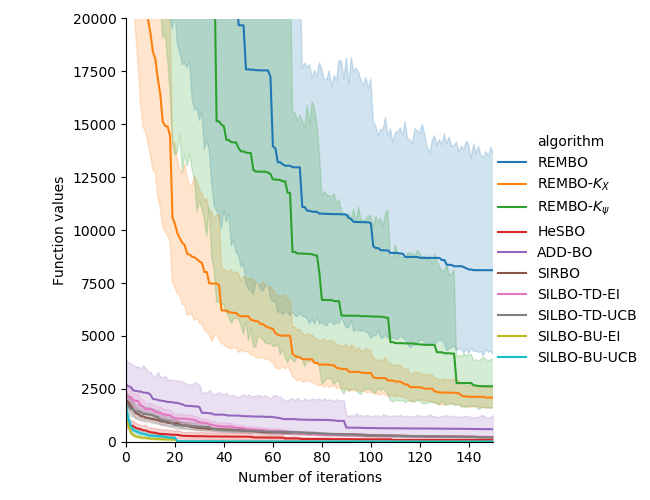}
        \caption{100d Colville}
    \end{subfigure}\hfil
    \begin{subfigure}[t]{0.4\textwidth}
        \centering
        \includegraphics[width=1\textwidth]{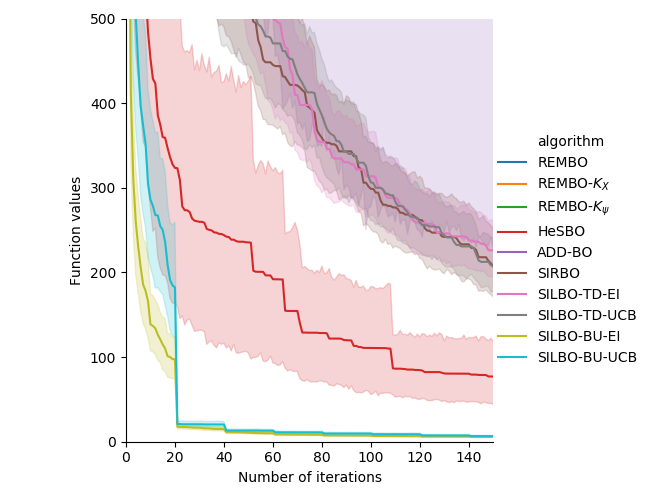}
        \caption{100d Colville~(enlarged)}
    \end{subfigure}
    \medskip
    \begin{subfigure}[t]{0.4\textwidth}
        \centering
        \includegraphics[width=1\textwidth]{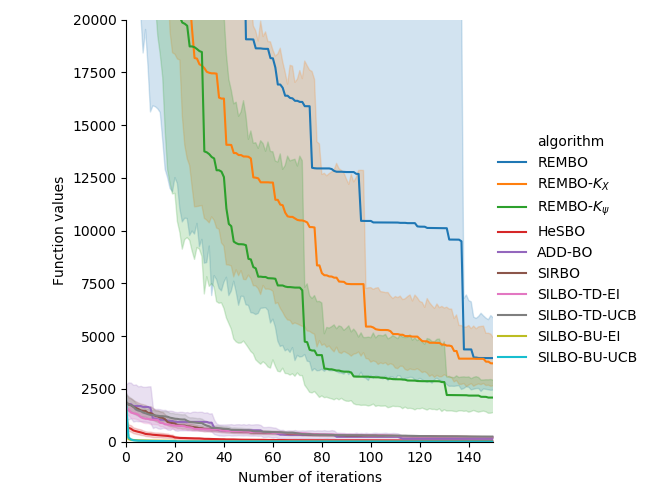}
        \caption{1000d Colville}
    \end{subfigure}\hfil
    \begin{subfigure}[t]{0.4\textwidth}
        \centering
        \includegraphics[width=1\textwidth]{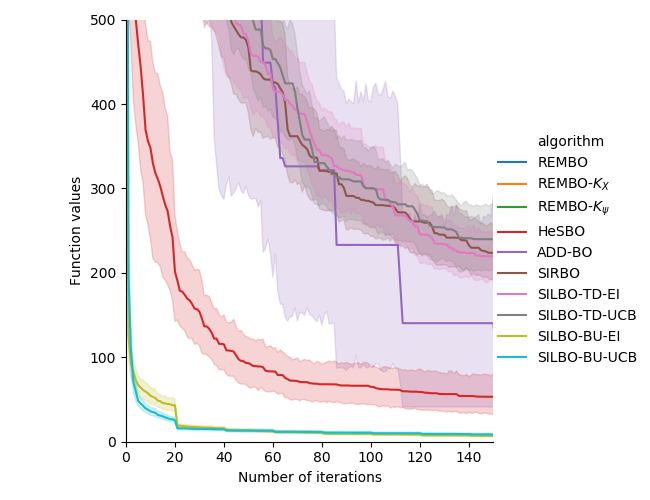}
        \caption{1000d Colville~(enlarged)}
    \end{subfigure}
    \caption{Performance on Colville under dimension 100(a)(b) and dimension 1000(c)(d) with embedding dimension $r$ = 4. To see performance discrepancy clearly among different algorithms, we enlarged~(a)(c) to get~(b)(d) respectively. We plotted mean and one standard deviation across 100 independent runs.}
    \label{fig:colville}
\end{figure}

Similar to \citep{nayebi19a}, these algorithms were under comparison upon the following widely-used test functions:~(1) Branin~(2) Hartmann-6~(3) Colville. The active dimensions for Branin,  Hartmann-6, Colville are 2, 6, and 4 respectively. We studied the cases with different input dimensions $d \in \{100,1000\}$. The experimental results are shown in Figure \ref{fig:branin}-\ref{fig:colville}. The proposed $\text{SILBO-BU}$, outperforms all benchmarks especially in high dimensions~($d=1000$). $\text{SILBO-TD}$ is also very competitive, surpassing most traditional algorithms. In addition, the experimental results under different acquisition functions are similar. ADD-BO performs excellently in Hartmann-6 due to the additive structure of the objective function itself, but it performs poorly in the 1000-dimension case (Figure~\ref{fig:hartmann-1000}) and in functions without the additive structure. HeSBO performs well in Colville and Hartmann-6, but poorly in Branin. Moreover, we can find that HeSBO does not converge in most cases. The performance of SIRBO is similar to our proposed $\text{SILBO-TD}$ since it also tries to learn an embedding space actively. In summary, the proposed method $\text{SILBO-BU}$ nearly beats all baselines due to effective semi-supervised dimension reduction and iterative embedding learning.

\subsection{Hyperparameter Optimization on Neural Network}
Following \citep{oh2018bock}, we also evaluated SILBO in a hyperparameter optimization task for a neural network on the \textit{MNIST} dataset. Specifically, the neural network is a multi-Layer perceptron~(MLP) with one hidden layer of size 50 and one output layer of size 10. These 500 initial weights were viewed as hyperparameters and were optimized using the Adam \citep{kingma2014adam} optimizer. 

\begin{figure}[htb]
    \centering
    \begin{subfigure}[t]{0.5\textwidth}
        \centering
        \includegraphics[width=0.9\textwidth]{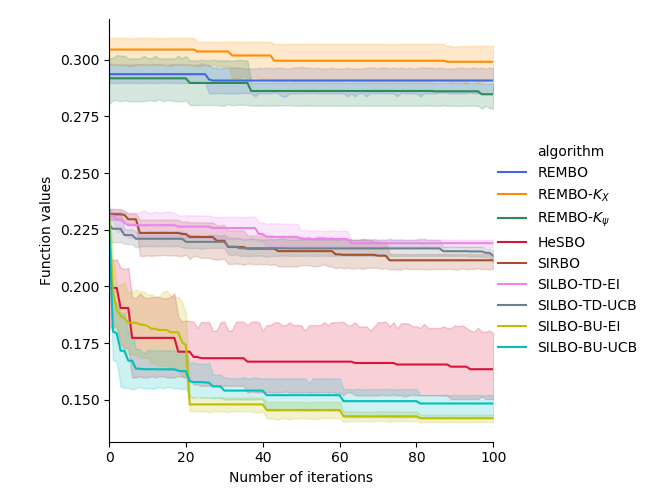}
    \end{subfigure}
    \caption{Performances on the MLP with embedding dimension $r$= 12. We plotted mean and one standard deviation across 5 independent runs.}
    \label{fig:neural}
\end{figure}

Figure \ref{fig:neural} shows that $\text{SILBO-BU}$ converges faster than all other algorithms. The performance of $\text{SILBO-TD}$ and SIRBO is close. These two methods perform better than other traditional methods such as REMBO and show competitive performance. Similar to the performance on synthetic functions, $\text{SILBO-BU}$ beats all other high-dimensional BO methods and can obtain better response values in the neural network hyperparameter optimization tasks.
Next, we evaluated the effectiveness of semi-supervised dimension reduction and iterative embedding learning respectively.

\subsection{Effectiveness of Semi-supervised Dimension Reduction}
For the function with effective dimensions, the low-dimensional subspace can reflect its intrinsic structure information. Thus, if we sample points from the function and project them into the effective subspace, the response values of these points will be dispersed along the direction of each subspace basis. Here, we compared the embedding learning performance of SILBO and SIR on Branin and Camel functions. We first evaluated 50 high-dimensional (i.e., 1000 dimensions) points to get their response values and then projected these points into the low-dimensional space generated by SILBO and SIR respectively. The goal of embedding learning is to find the e.d.r directions that preserve the information of the objective function as much as possible. Figure \ref{fig:effectivecamel} and Figure \ref{fig:effectivebranin} summarize the observed performance. 
\begin{figure}[htb]
\centering
    \begin{subfigure}{0.4\textwidth}
    \centering
        \includegraphics[width=2in,height =1.4in]{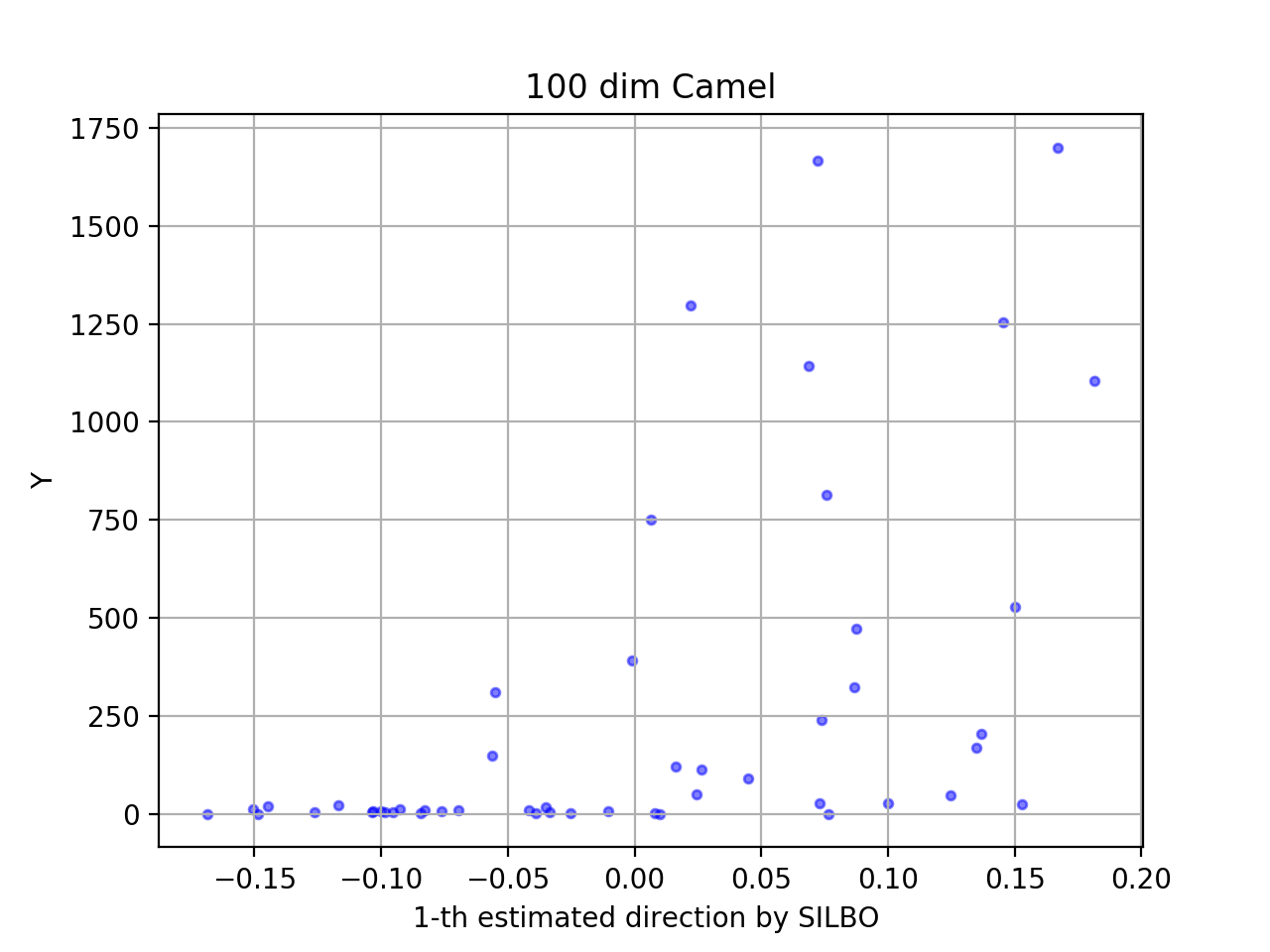}
        \caption{}
    \end{subfigure}\hfil
    \begin{subfigure}{0.4\textwidth}
    \centering
        \includegraphics[width=2in,height =1.4in]{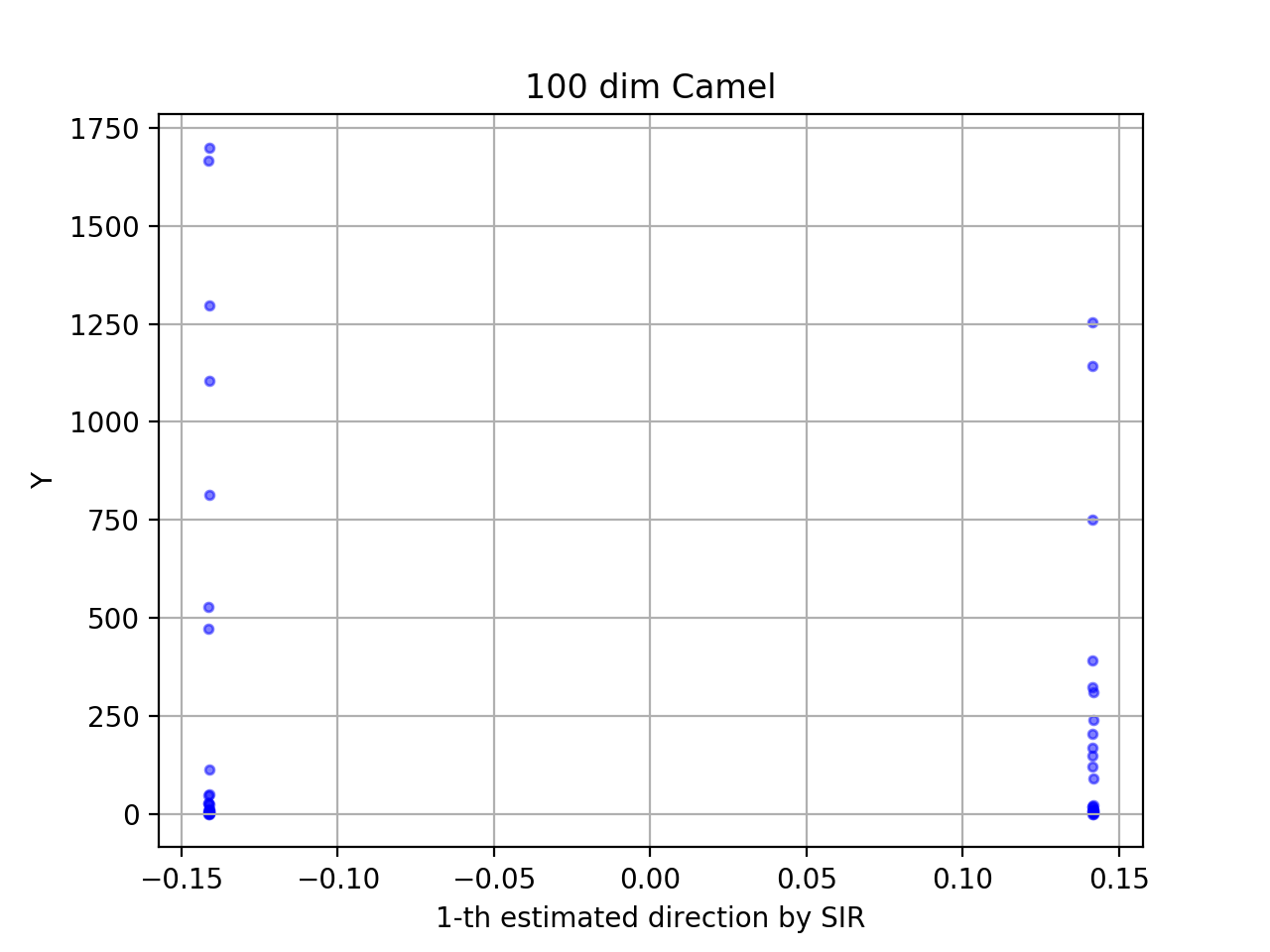}
        \caption{}
         \label{fig:effectivecamel-sir}
    \end{subfigure}
    \caption{Point distribution in the low-dimensional space generated by SILBO and SIR respectively on Camel. The number of e.d.r directions is 1.}
    \label{fig:effectivecamel}
\end{figure}

\begin{figure}[htb]
    \centering
    \begin{subfigure}{0.4\textwidth}
    \centering
        \includegraphics[width=2in,height =1.4in]{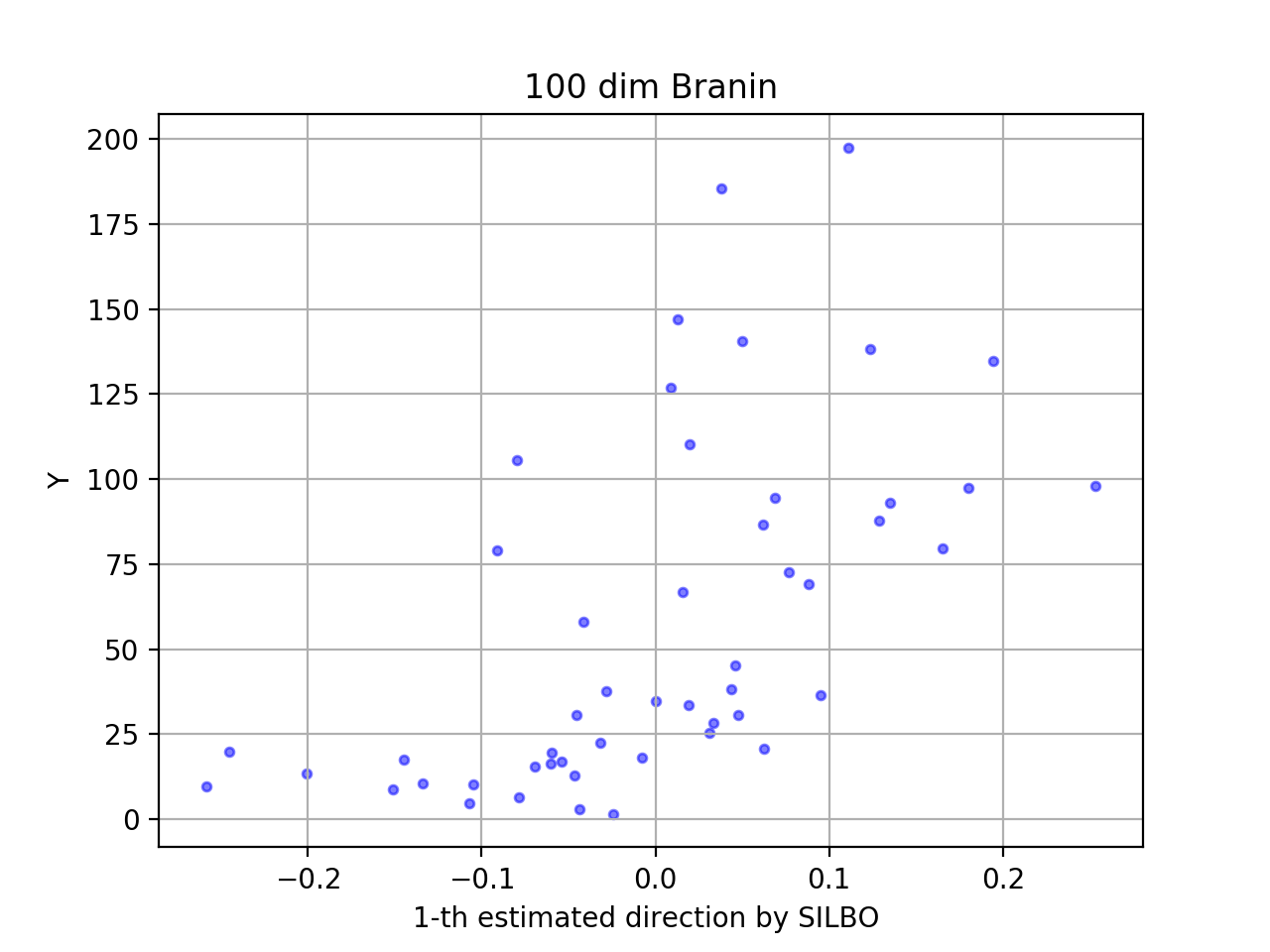}
        \caption{}
    \end{subfigure}\hfil
    \begin{subfigure}{0.4\textwidth}
    \centering
    \includegraphics[width=2in,height =1.4in]{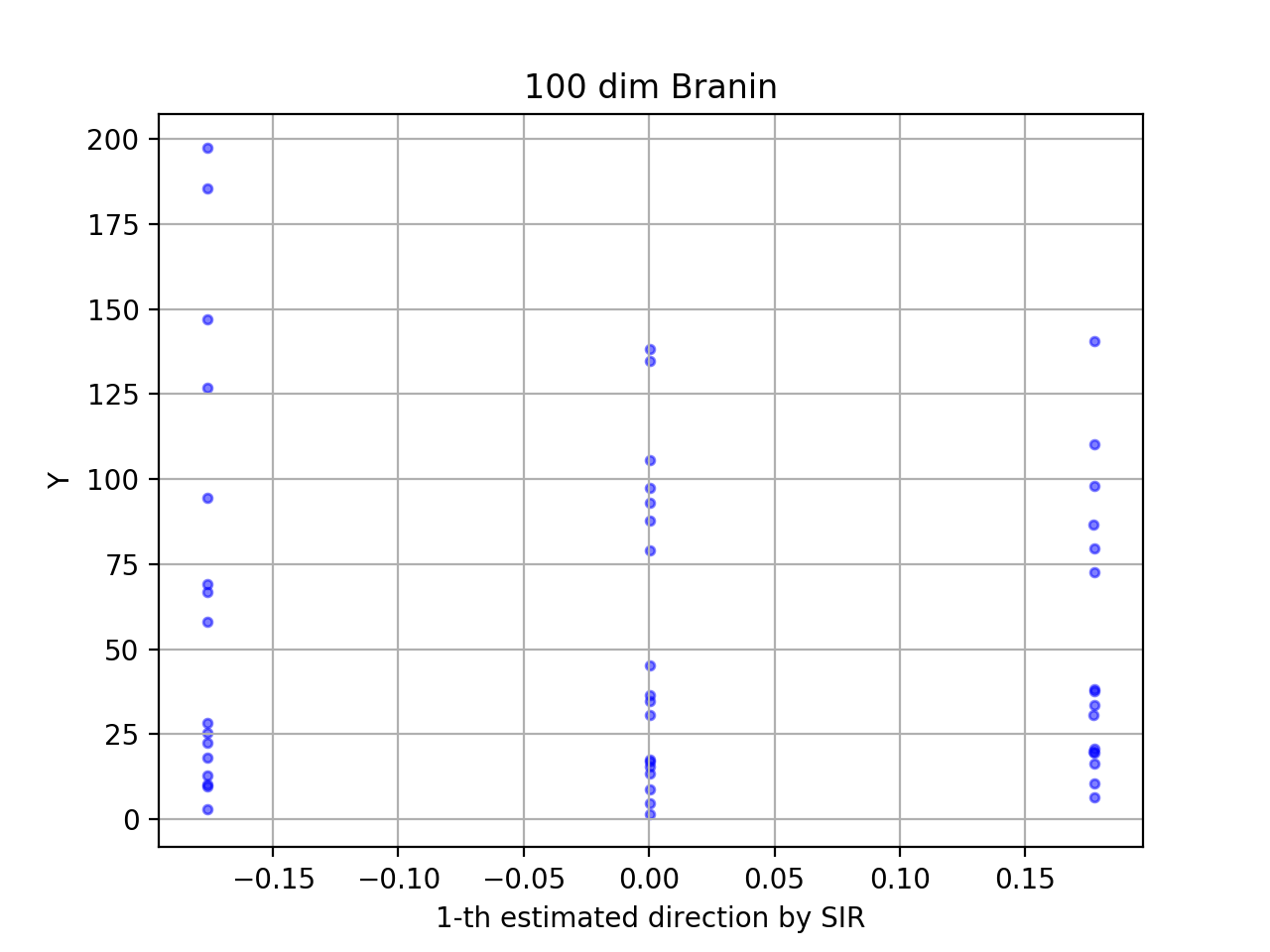}
        \caption{}
           \label{fig:effectivebranin-sir-2-1}
    \end{subfigure}\hfil
    \medskip
    \begin{subfigure}{0.4\textwidth}
    \centering
        \includegraphics[width=2in,height =1.4in]{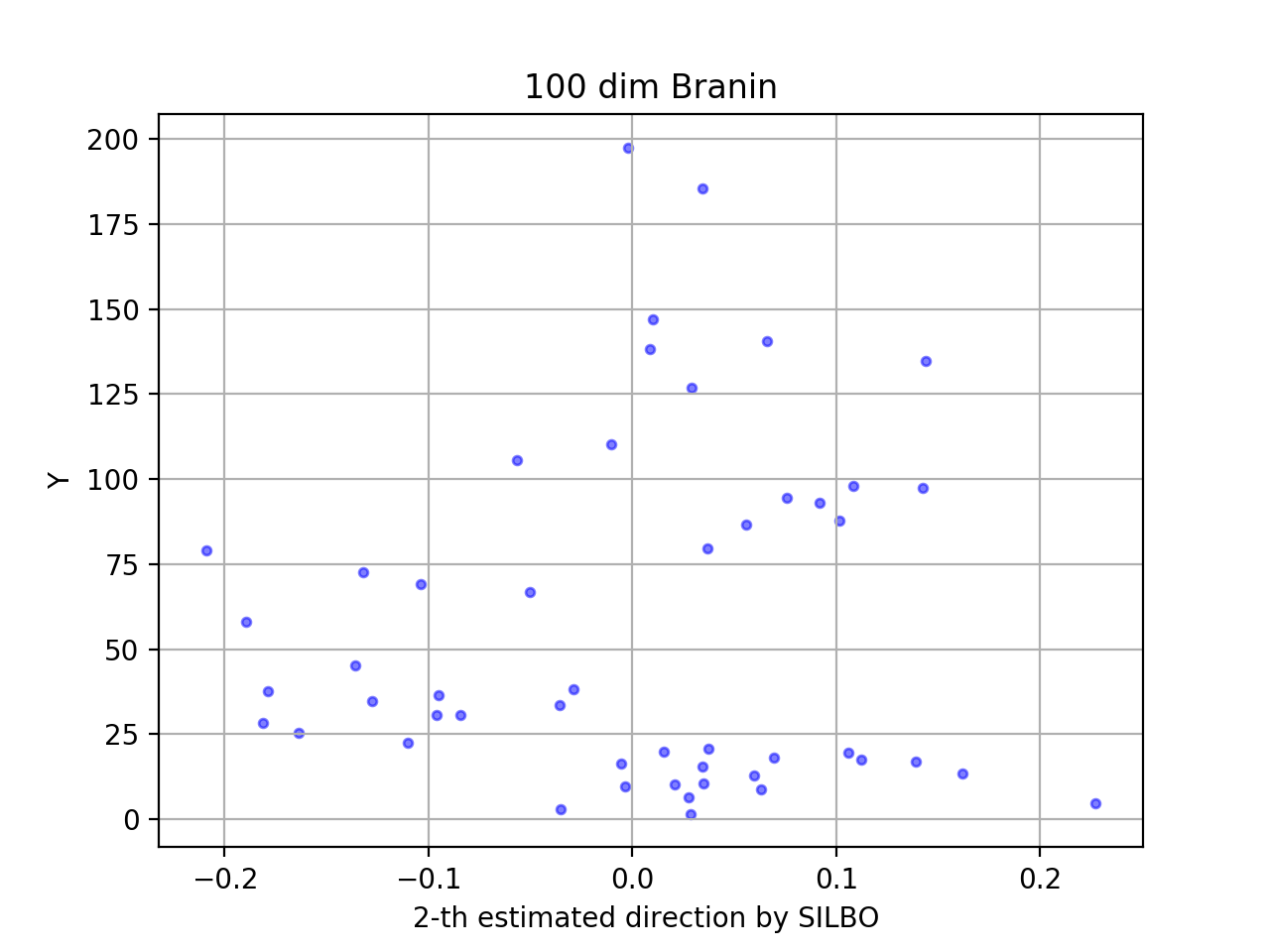}
        \caption{}
    \end{subfigure}\hfil
    \begin{subfigure}{0.4\textwidth}
    \centering
        \includegraphics[width=2in,height =1.4in]{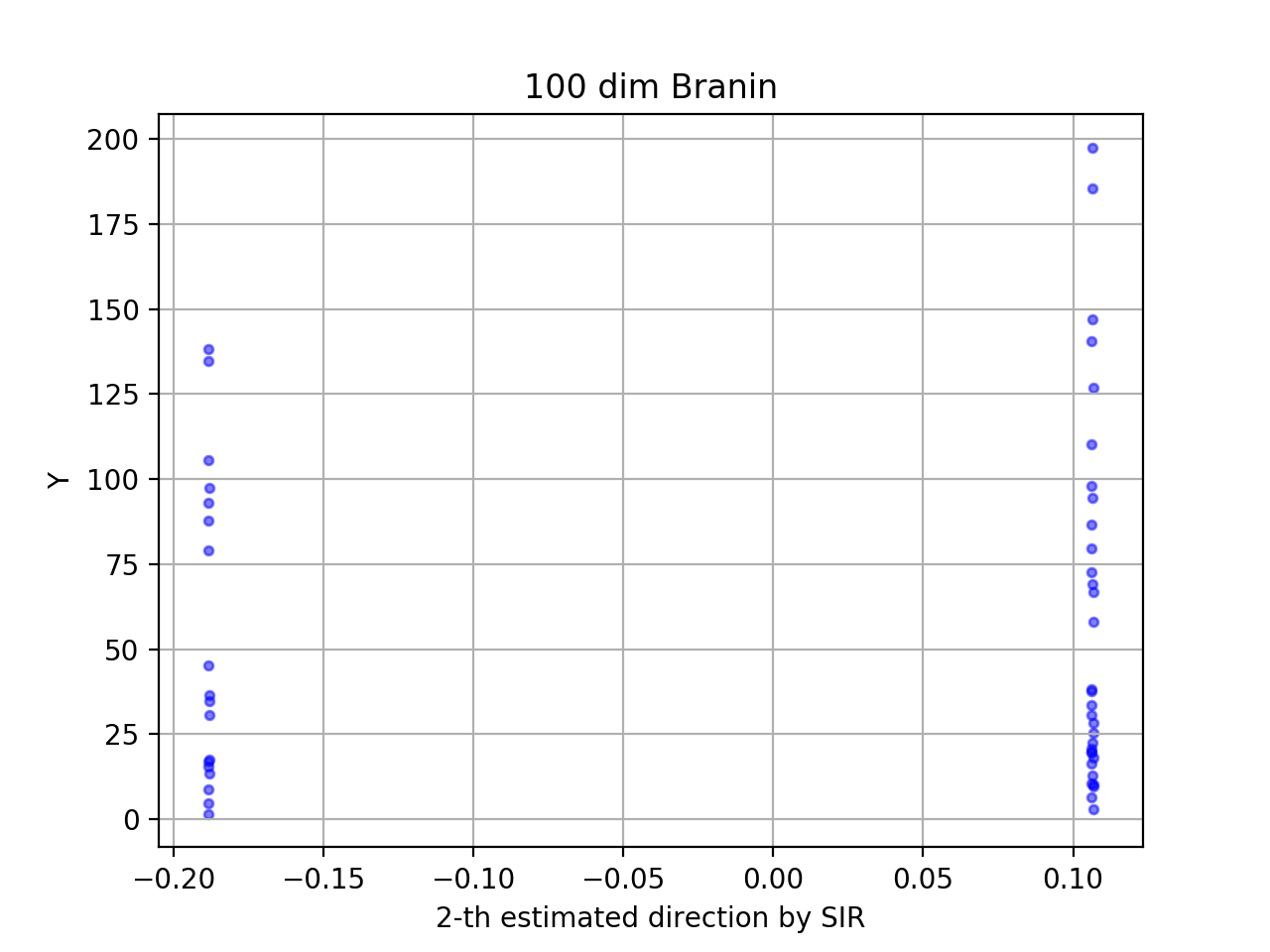}
        \caption{}
        \label{fig:effectivebranin-sir-2-2}
    \end{subfigure}\hfil
    \caption{Point distribution in the low-dimensional space generated by SILBO and SIR respectively on Branin. The number of e.d.r directions is 2.~(a)(b) plot the points projected to one estimated direction.~(c)(d) plot the points projected to the other direction.}
    \label{fig:effectivebranin}
\end{figure}

In Figure~\ref{fig:effectivecamel-sir}, Figure~\ref{fig:effectivebranin-sir-2-1}, and Figure~\ref{fig:effectivebranin-sir-2-2}, the information extracted by SIR stacks together. A large number of points are distributed in a very small interval, but correspond to many different response values with large variance.
The consequence is that a lot of information will be lost if we train a Gaussian process regression model using these low-dimensional representations. In contrast, due to the semi-supervised dimension reduction, the information extracted by SILBO is more complete without losing the intrinsic information of the objective function. 
%

To further demonstrate the effectiveness of semi-supervised dimension reduction, we also evaluated SILBO only using the labeled points in each iteration. We call this method $\text{SILBO-only-labeled}$.  Figure \ref{fig:labeledvsunlabeled} shows the comparison results between $\text{SILBO-semi}$ and $\text{SILBO-only-labeled}$ with the $bottom$-$up$ mapping strategy. From Figure \ref{fig:labeledvsunlabeled}, we can see that the semi-supervised dimension reduction technique can significantly improve the performance of high-dimensional Bayesian optimization.


\begin{figure}[t]
    \centering
    \begin{subfigure}[H]{0.3\textwidth}
    \centering
    \includegraphics[width=1.9in,height=1.2in]{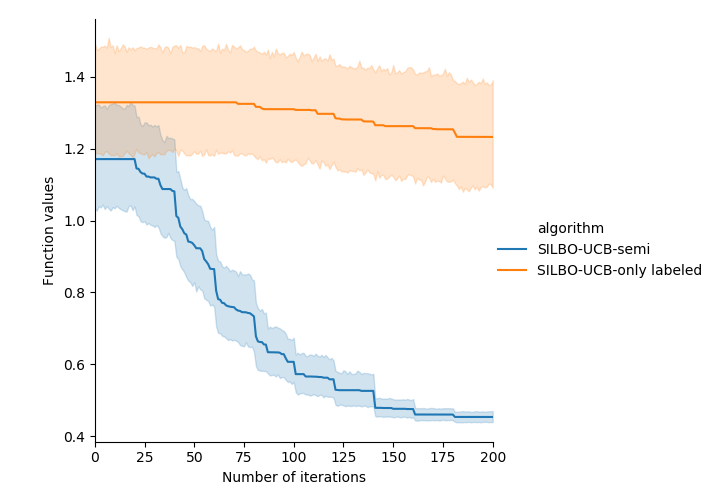}
        \caption{1000d Branin}
    \end{subfigure}
    ~
    \begin{subfigure}[H]{0.3\textwidth}
    \centering
        \includegraphics[width=1.9in,height=1.2in]{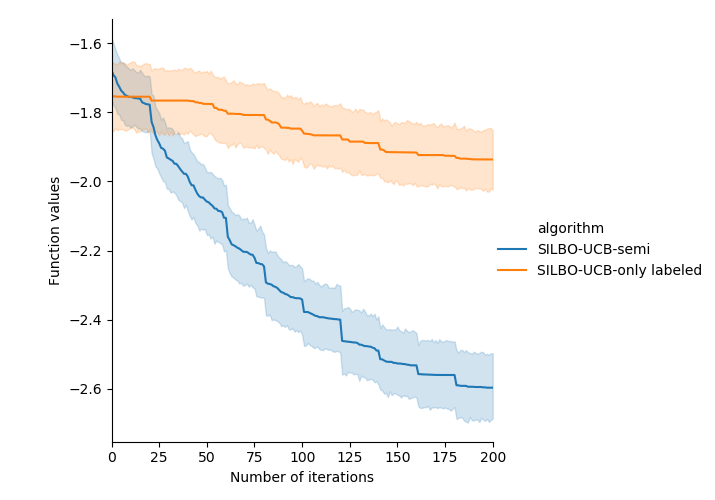}
        \caption{1000d Hartmann6}
    \end{subfigure}
    ~
    \begin{subfigure}[H]{0.3\textwidth}
    \centering
        \includegraphics[width=1.9in,height=1.2in]{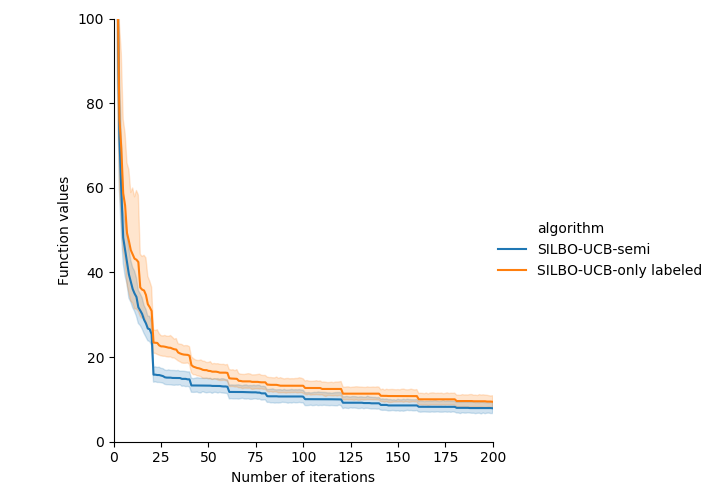}
        \caption{1000d Colville}
    \end{subfigure}
    \caption{Performance comparison between SILBO-semi and SILBO-only-labeled on three synthetic functions. All of the methods update the projection matrix every 20 iteration.}
    \label{fig:labeledvsunlabeled}
\end{figure}

\subsection{Effectiveness of Iterative Learning}\label{effectiveofiterativelearning}
In the iterative process of SILBO, the projection matrix is updated based on random initial points and the points acquired from the acquisition function. 
To verify the effectiveness of iterative learning, we need to show that the points acquired from the acquisition function indeed contribute to generating a better embedding.
Figure \ref{fig:iterativeeffectiveness} shows the performance comparison results with and without the iterative learning process. $\text{SILBO-only-random}$ means that the low-dimensional embedding is learned only using the random initial points, while $\text{SILBO-random+acquisition}$ contains the random initial points and the points from the acquisition function. 

We chose 20 random initial points for $\text{SILBO-only-random}$ and chose 10 random initial points accompanied with 10 points acquired from the acquisition function iteratively for $\text{SILBO-random+acquisition}$. 
We compared the optimization performance of the two algorithms after these 20 points. From Figure \ref{fig:iterativeeffectiveness}, we can see that $\text{SILBO-random+acquisition}$ can find a better solution at most cases, which indicates the effectiveness of iterative learning. 
The main reason why $\text{SILBO-random+acquisition}$ performs slightly worse on Branin is due to less initial points.
%
Moreover, the performance of $\text{SILBO-random+acquisition}$ in Figure~  \ref{fig:iterativeeffectiveness} is worse than other experiments for two reasons. One is that the initial points are much less, we only use 20 initial points. The other is that the low-dimensional embedding space is only updated once.

Furthermore, we compare the optimization performance of SILBO with $bottom$-$up$ strategy under different embedding updating frequency. Figure \ref{fig:unlabeledeffectiveness} shows that the optimization performance is improved when using low updating frequency.

\begin{figure}[htb]
    \centering
    \begin{subfigure}{0.3\textwidth}
    \centering
    \includegraphics[width=2in,height =1.2in]{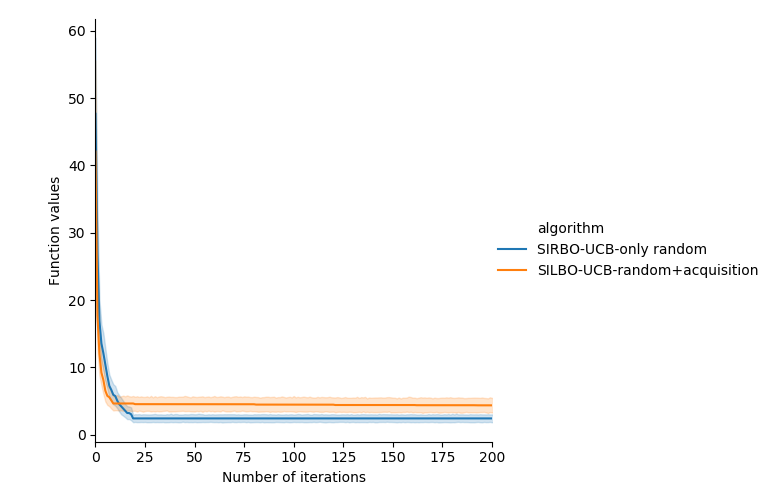}
        \caption{1000d Branin}
    \end{subfigure}\hfil
    \hspace{1em}
    \begin{subfigure}{0.3\textwidth}
    \centering
        \includegraphics[width=2in,height =1.2in]{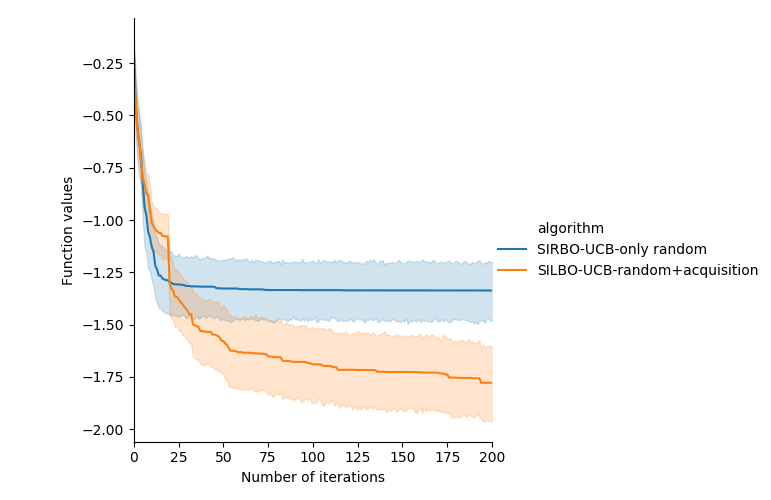}
        \caption{1000d Hartmann6}
    \end{subfigure}\hfil
    \hspace{1em}
    \begin{subfigure}{0.3\textwidth}
    \centering
        \includegraphics[width=2in,height =1.2in]{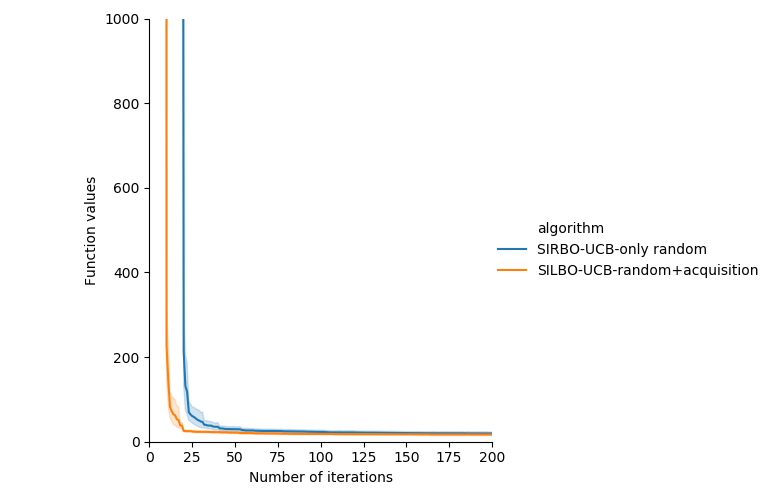}
        \caption{1000d Colville}
    \end{subfigure}\hfil
    \caption{The optimization performance of SILBO-only-random and SILBO-random+acquisition using 20 points to find the low-dimensional embedding space. The total number of points to construct the projection matrix is the same.}
    \label{fig:iterativeeffectiveness}
\end{figure}

\begin{figure}[htb]
    \centering
    \begin{subfigure}{0.3\textwidth}
    \centering
    \includegraphics[width=2in,height =1.4in]{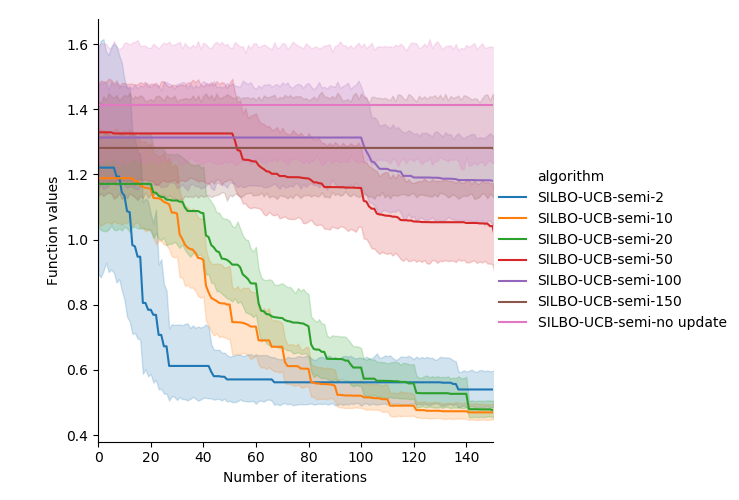}
        \caption{1000d Branin}
    \end{subfigure}\hfil
    ~
    \begin{subfigure}{0.3\textwidth}
    \centering
        \includegraphics[width=2in,height =1.4in]{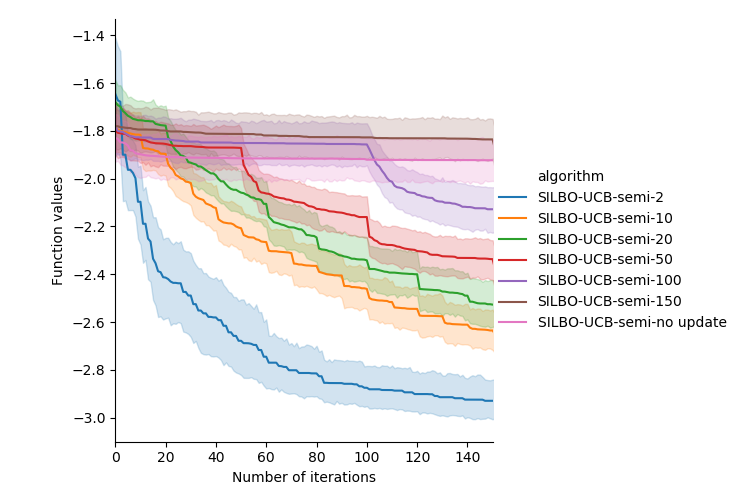}
        \caption{1000d Hartmann6}
    \end{subfigure}\hfil
    ~
    \begin{subfigure}{0.3\textwidth}
    \centering
        \includegraphics[width=2in,height =1.4in]{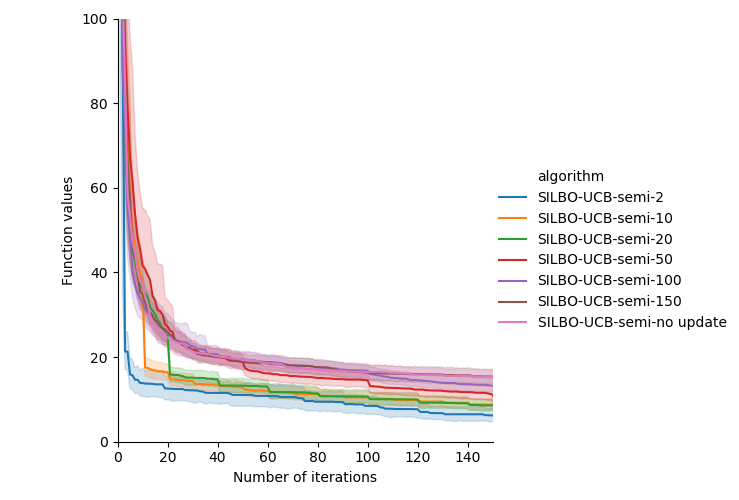}
        \caption{1000d Colville}
    \end{subfigure}\hfil
    \caption{The optimization performance of SILBO under different iterative frequency. The number represents the frequency of updating the projection matrix.}
    \label{fig:unlabeledeffectiveness}
\end{figure}

\begin{figure}[H]
    \centering
    \begin{subfigure}[t]{0.4\textwidth}
        \centering
        \includegraphics[width=0.8\textwidth]{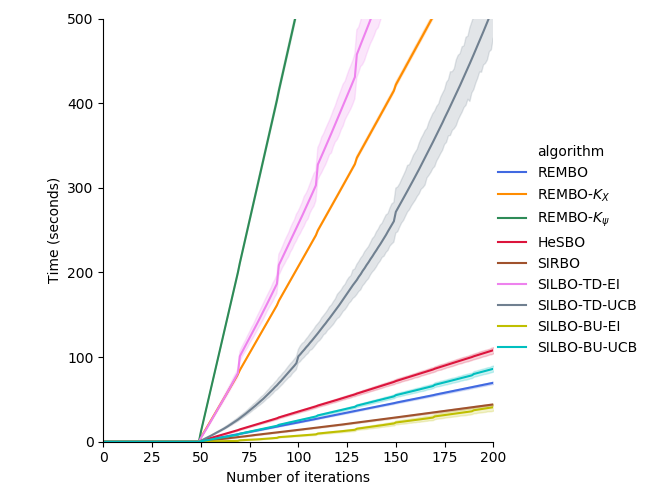}
        \caption{1000d Branin }
    \end{subfigure}%
    ~ 
    \begin{subfigure}[t]{0.4\textwidth}
        \centering
        \includegraphics[width=0.8\textwidth]{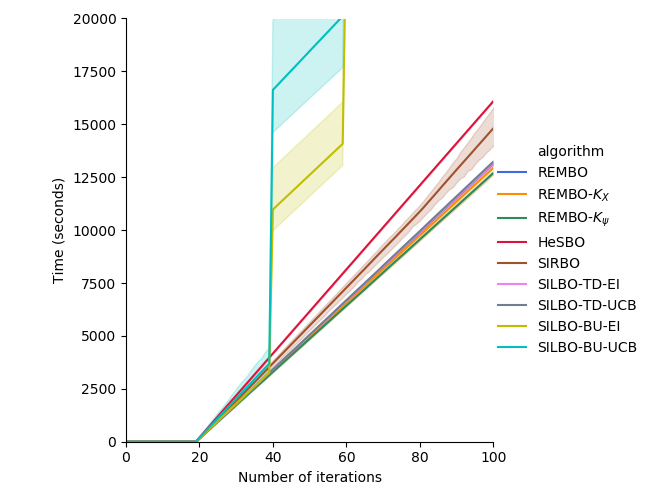}
        \caption{500d MLP}
    \end{subfigure}
    \caption{Comparison of the cumulative time on Branin with dimension 1000 and on neural network~(MLP) with dimension 500.}
    \label{fig:scalability}
\end{figure}

\subsection{Scalability Analysis}
We analyzed the scalability by comparing the cumulative time of each algorithm under the same number of iterations. As shown in Figure \ref{fig:scalability}, for low-cost objective functions such as Branin, $\text{SILBO-BU}$ is fast, while $\text{SILBO-TD}$ is relatively slow. This is because $\text{SILBO-TD}$ takes a lot of time to solve the Equation \ref{strategy2}. For expensive objective functions such as neural network, $\text{SILBO-TD}$ takes approximately the same time as most algorithms, while $\text{SILBO-BU}$ takes more time due to its re-evaluation procedure.

\section{Conclusion and Future Work}\label{conclusion}
High-dimensional Bayesian optimization is a very challenging task. To address the problem , we proposed a novel iterative embedding learning framework SILBO for high-dimensional Bayesian optimization through semi-supervised dimensional reduction. We also proposed a randomized fast algorithm for solving the embedding matrix efficiently. Moreover, according to the cost of the objective function, two different mapping strategies are proposed. Experimental results on both synthetic function and hyperparameter optimization tasks reveal that SILBO outperforms the existing state-of-the-art high-dimensional Bayesian optimization methods. 

In the future, we plan to address the problem of how to determine the true dimensionality during the dimension reduction process. Moreover, we also plan to use the unlabeled information to determine the kernel hyperparameters of Gaussian process and other hyperparameters of Bayesian optimization. Also, we further plan to apply SILBO to more AutoML~(Automatic Machine Learning) tasks.


\acks{This work was supported by the National Natural Science Foundation of China~\\(U1811461, 61702254), National Key R\&D Program of China~(2019YFC1711000), Jiangsu Province Science and Technology Program~(BE2017155), National Natural Science Foundation of Jiangsu Province~(BK20170651), and Collaborative Innovation Center of Novel Software Technology and Industrialization.}


\newpage



\vskip 0.2in
\bibliography{sample}

\end{document}